\newtheorem{dfn}{\textbf{Definition}}
\newtheorem{prop}{\textbf{Proposition}}
\newcommand{\sC}[0]{\mathcal{C}}
\newcommand{\sG}[0]{\mathcal{G}}
\newcommand{\sV}[0]{\mathcal{V}}
\newcommand{\sE}[0]{\mathcal{E}}
\newcommand{\sEv}[0]{\mathcal{E}_v}
\newcommand{\va}[0]{\boldsymbol{a}}
\newcommand{\vh}[0]{\boldsymbol{h}}
\newcommand{\vm}[0]{\boldsymbol{m}}
\newcommand{\vx}[0]{\boldsymbol{x}}
\newcommand{\vz}[0]{\boldsymbol{z}}
\newcommand{\mH}[0]{\boldsymbol{H}}
\newcommand{\mI}[0]{\boldsymbol{I}}
\newcommand{\mM}[0]{\boldsymbol{M}}
\newcommand{\mP}[0]{\boldsymbol{P}}
\newcommand{\mR}[0]{\mathcal{R}}
\newcommand{\mX}[0]{\boldsymbol{X}}
\newcommand{\mZ}[0]{\boldsymbol{Z}}
\DeclareMathOperator*{\argmin}{arg\,min}
\newcommand{\bigO}[0]{\mathcal{O}}
\newcommand{\fVE}[0]{f_{\sV \rightarrow \sE}}
\newcommand{\fEV}[0]{f_{\sE \rightarrow \sV}}
\def\eg{\emph{e.g.}}
\def\ie{\emph{i.e.}}
\newcommand{\cmark}{\textcolor[rgb]{0,0.6,0}{\ding{51}}}%
\newcommand{\xmark}{\textcolor[rgb]{1,0,0}{\ding{55}}}%
\definecolor{lightgray}{gray}{0.9}
\definecolor{darkblue}{RGB}{0,0,120}
\begin{document}

\title{Modeling Edge-Specific Node Features through Co-Representation Neural Hypergraph Diffusion}

\author{Yijia Zheng}
\orcid{0000-0002-6585-8273}
\affiliation{%
  \institution{University of Amsterdam}
  \city{Amsterdam}
  \country{The Netherlands}
}
\email{y.zheng@uva.nl}

\author{Marcel Worring}
\orcid{0000-0003-4097-4136}
\affiliation{%
  \institution{University of Amsterdam}
  \city{Amsterdam}
  \country{The Netherlands}
}
\email{m.worring@uva.nl}

\renewcommand{\shortauthors}{Yijia Zheng and Marcel Worring}

\begin{abstract}
  Hypergraphs are widely being employed to represent complex higher-order relations in real-world applications. Most existing research on hypergraph learning focuses on node-level or edge-level tasks. A practically relevant and more challenging task, edge-dependent node classification (ENC), is still under-explored. In ENC, a node can have different labels across different hyperedges, which requires the modeling of node features unique to each hyperedge. The state-of-the-art ENC solution, WHATsNet, only outputs single node and edge representations, leading to the limitations of \textbf{entangled edge-specific features} and \textbf{non-adaptive representation sizes} when applied to ENC. Additionally, WHATsNet suffers from the common \textbf{oversmoothing issue} in most HGNNs. To address these limitations, we propose \textbf{CoNHD}, a novel HGNN architecture specifically designed to model edge-specific features for ENC. Instead of learning separate representations for nodes and edges, CoNHD reformulates within-edge and within-node interactions as a hypergraph diffusion process over node-edge co-representations. We develop a neural implementation of the proposed diffusion process, leveraging equivariant networks as diffusion operators to effectively learn the diffusion dynamics from data. Extensive experiments demonstrate that CoNHD achieves the best performance across all benchmark ENC datasets and several downstream tasks without sacrificing efficiency. Our implementation is available at \url{https://github.com/zhengyijia/CoNHD}.
\end{abstract}

\begin{CCSXML}
<ccs2012>
   <concept>
       <concept_id>10002950.10003624.10003633.10010917</concept_id>
       <concept_desc>Mathematics of computing~Graph algorithms</concept_desc>
       <concept_significance>500</concept_significance>
       </concept>
   <concept>
       <concept_id>10002950.10003624.10003633.10003637</concept_id>
       <concept_desc>Mathematics of computing~Hypergraphs</concept_desc>
       <concept_significance>500</concept_significance>
       </concept>
   <concept>
       <concept_id>10010147.10010257</concept_id>
       <concept_desc>Computing methodologies~Machine learning</concept_desc>
       <concept_significance>500</concept_significance>
       </concept>
   <concept>
       <concept_id>10002951.10003227.10003351</concept_id>
       <concept_desc>Information systems~Data mining</concept_desc>
       <concept_significance>500</concept_significance>
       </concept>
   <concept>
       <concept_id>10002951.10003260.10003282.10003292</concept_id>
       <concept_desc>Information systems~Social networks</concept_desc>
       <concept_significance>500</concept_significance>
       </concept>
 </ccs2012>
\end{CCSXML}

\ccsdesc[500]{Mathematics of computing~Graph algorithms}
\ccsdesc[500]{Mathematics of computing~Hypergraphs}
\ccsdesc[500]{Computing methodologies~Machine learning}
\ccsdesc[500]{Information systems~Data mining}
\ccsdesc[500]{Information systems~Social networks}

\keywords{Hypergraph Neural Networks; Hypergraph Diffusion}


\maketitle


\section{Introduction}\label{sec:introduction}

Real-world applications often involve intricate higher-order relations that cannot be represented by traditional graphs with pairwise connections \citep{battiston2020networks, yang2023group, fanseu2021hypergraph}. Hypergraphs, where an edge can connect more than two nodes, provide a flexible structure for representing such relations \citep{gao2020hypergraph, antelmi2023survey}. To tackle hypergraph-related tasks such as node classification \cite{liu2024hypergraph, zou2024unig, benko2024hypermagnet} and edge prediction \cite{kim2024hypeboy, jo2021edge, sun2021multi, chen2022explainable}, message passing-based hypergraph neural networks (HGNNs) have become the common solution \citep{kim2024survey}. Recent research \citep{huang2021unignn, chien2022you} shows that most HGNNs can be formulated as an instantiation of the two-stage message passing framework depicted in Fig.~\ref{fig:hgnn_architectures}(a). The first stage aggregates messages from nodes to update the edge representation, while the second stage aggregates messages from edges to update the node representation. Although message passing-based HGNNs have achieved success in various applications \citep{kim2024survey, chen2024hyperedge, saifuddin2023seq}, the majority of research efforts have concentrated on node-level and edge-level tasks. In many real-world hypergraphs, a node's property varies with different hyperedges it belongs to. For instance, in a co-authorship network, a researcher may be the lead author in one paper but the corresponding author in another. Likewise, in a multiplayer game, a player might be the winner in one match yet the loser in another. Motivated by such scenarios, \citet{choe2023classification} introduce a new task namely \textit{edge-dependent node classification} (ENC), where a node can have different labels across different hyperedges. This new task has been shown to be valuable for many downstream tasks \citep{choe2023classification}, including ranking aggregation \citep{chitra2019random}, node clustering \citep{hayashi2020hypergraph}, and product-return prediction \citep{li2018tail}. 

Although many message passing-based HGNNs can be applied to ENC, \citet{choe2023classification} highlight that these methods overlook edge-specific node features during aggregation. To address this limitation, they propose WHATsNet, the current state-of-the-art method for ENC. WHATsNet follows the edge-dependent message passing framework as shown in Fig.~\ref{fig:hgnn_architectures}(b), where edge-dependent representations are extracted before aggregation. The final node and edge representations are concatenated to predict the ENC labels. Adopting the dominant message passing framework to address ENC is intuitive, but does it yield the most appropriate solution?

\begin{figure*}[t]
\begin{center}
\includegraphics[trim={0cm 0cm 0cm 0cm},clip,width=0.9\linewidth]{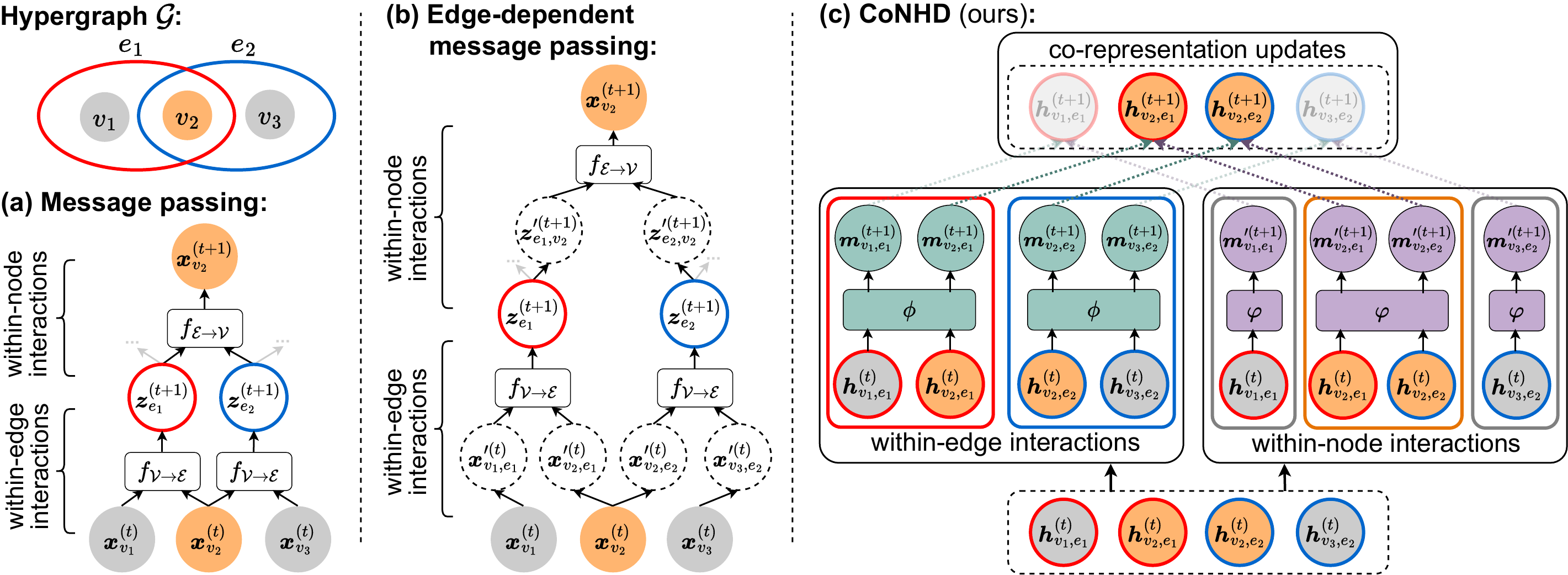}
\end{center}
\caption{\textbf{Different HGNN architectures.} (a,b) The (edge-dependent) message passing framework aggregates (edge-dependent) messages from neighboring nodes to update a single edge representation through an aggregation function $\fVE$ and then from neighboring edges back to update a single node representation through an aggregation function $\fEV$. (c) Our proposed CoNHD redefines within-edge and within-node interactions as multi-input multi-output processes among node-edge co-representations. These interactions are modeled by two equivariant networks, $\phi$ and $\varphi$, which can generate diverse node-specific or edge-specific information for different node-edge pairs. The outputs from both interactions are used to update the co-representations.}
\label{fig:hgnn_architectures}
\end{figure*}

Unlike traditional node-level or edge-level tasks, ENC allows a node to have varying labels across different hyperedges. This requires, as indicated in \citep{choe2023classification}, the model to capture node features unique to each hyperedge. However, the message passing framework aggregates different edge-specific features into a single node representation, leading to the following two limitations:

\textbf{(1) Entangled edge-specific features.} The single node representation entangles edge-specific features from different edges, making it challenging to distinguish features corresponding to a specific target edge. This becomes particularly problematic when the edge-specific features are highly dissimilar, as the entangled vector may obscure features specific to different edges. To verify this assumption, as shown in Figure~\ref{fig:emailenron_entropy_microf1}, we examine the performance of WHATsNet under different node entropy levels, where higher entropy levels indicate that the node has more dissimilar labels in different edges. At low entropy levels, since a node has similar labels in neighboring edges, the prediction may rely on similar features, and therefore WHATsNet with single node representations performs well. As the entropy level increases, dissimilar edge-specific features are required to predict different labels. The performance of WHATsNet drops significantly, which supports our assumption that a single node representation with entangled edge-specific features is insufficient for predicting different ENC labels.

\textbf{(2) Non-adaptive representation sizes.} Storing different edge-specific features in a fixed-size node representation vector causes information loss for large-degree nodes, which interact with more neighboring hyperedges and therefore require larger representation sizes. As shown in Figure~\ref{fig:visualization_embeddings}, WHATsNet fails to generate discriminative embeddings for node-edge pairs related to large-degree nodes. Since low-degree nodes have fewer neighboring edges and do not require large representation sizes, simply increasing the embedding dimension for all nodes leads to excessive computational costs and problems like overfitting and optimization difficulties \citep{luo2021graph, goodfellow2016deep}.

Apart from the above two limitations specific to the ENC task, WHATsNet \citep{choe2023classification} also suffers from the common \textbf{oversmoothing issue} in most HGNNs \citep{wang2023equivariant, yan2024hypergraph}, as demonstrated in Fig.~\ref{fig:depth_experiment}. This issue hinders the utilization of long-range information and limits model performance. Unlike traditional HGNNs, hypergraph diffusion methods \citep{liu2021strongly, fountoulakis2021local, veldt2023augmented} obtain optimal node representations by directly optimizing a regularized objective function, ensuring convergence to the desired solution. \citet{wang2023equivariant} propose an HGNN inspired by hypergraph diffusion, demonstrating its robustness to the oversmoothing issue. However, their approach remains within the message passing framework and inherits the two aforementioned limitations when applied to ENC.

To overcome the limitations of message passing for ENC, we introduce \underline{\textbf{Co}}-representation \underline{\textbf{N}}eural \underline{\textbf{H}}ypergraph \underline{\textbf{D}}iffusion (\textbf{CoNHD}), a novel diffusion-based HGNN architecture for modeling edge-specific features. Specifically, we show that the two aforementioned limitations are both related to the single-output design in message passing as shown in Fig.~\ref{fig:single_output_multi_output}(a), which only generates a single node or edge representation. Therefore, we first extend the concept of hypergraph diffusion by utilizing node-edge co-representations, redefining the input and output of within-edge and within-node interactions as information exchanged across multiple node-edge pairs, as shown in Fig.~\ref{fig:single_output_multi_output}(b). The co-representation design enables each node to have multiple representations, and the number of these representations scales with the node degree. We further develop a neural implementation that leverages learnable equivariant networks as diffusion operators, which can adaptively learn suitable diffusion dynamics and effectively capture diverse edge-specific features, eliminating the need for handcrafting regularization functions.
\textbf{Our main contributions} are twofold: 
\begin{enumerate}[leftmargin=*, topsep=1mm]
\item We define \textbf{co-representation hypergraph diffusion}, a new concept that generalizes hypergraph diffusion using node-edge co-representations, which offers the benefits of disentangled edge-specific features and adaptive representation sizes. 
\item We propose \textbf{CoNHD}, a neural implementation of the proposed diffusion process. This results in a novel HGNN architecture that can learn diffusion dynamics from data and effectively capture edge-specific features for addressing the ENC task.
\end{enumerate}
We conduct extensive experiments to validate the effectiveness and efficiency of CoNHD, demonstrating that CoNHD achieves the best performance across ten ENC datasets as well as several downstream tasks while maintaining high efficiency.

\begin{figure}[t]
\centering
\includegraphics[width=0.85\linewidth]{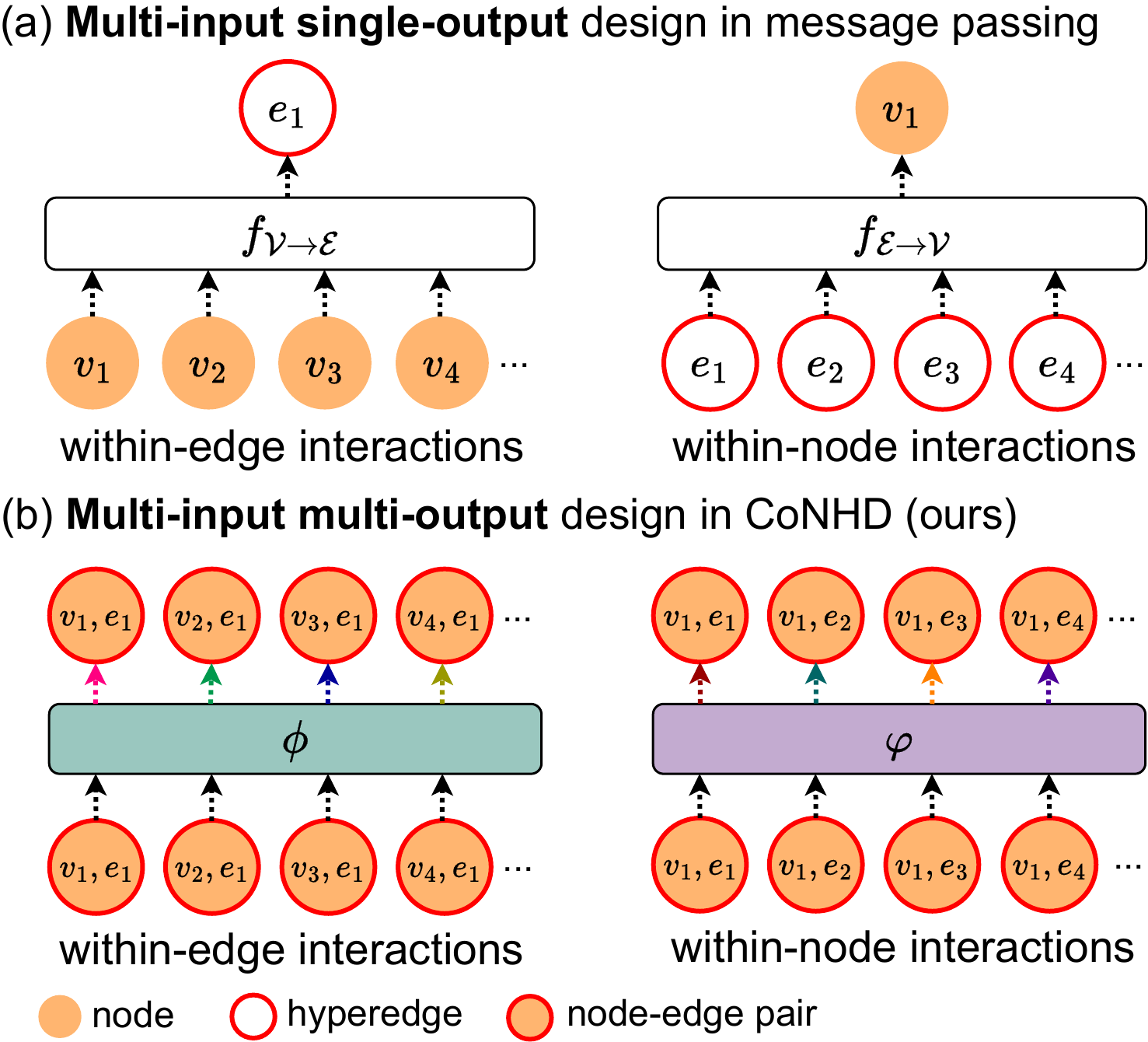}
\caption{\textbf{Comparison between the single-output design in message-passing and the multi-output design in our CoNHD method. (a) In the single-output design, information from multiple neighboring nodes or edges is aggregated into a single edge or node using the aggregation function $\fVE$ or $\fEV$, respectively. (b) In our design, information diffuses across node-edge pairs using two multi-input multi-output functions $\phi$ and $\varphi$. These functions are designed as equivariant, which can produce diverse outputs while maintaining element-wise consistency under permutation.}}
\label{fig:single_output_multi_output}
\end{figure}

\section{Related Work}\label{sec:related_work}

\textbf{Hypergraph Neural Networks.} Inspired by the success of graph neural networks (GNNs) \citep{kipf2017semisupervised, wu2020comprehensive, wu2022graph}, hypergraph neural networks (HGNNs) have been proposed for modeling complex higher-order relations \citep{kim2024survey, duta2023sheaf}. HyperGNN \citep{feng2019hypergraph, gao2022hgnn+} and HCHA \citep{bai2021hypergraph} define hypergraph convolution based on the clique expansion graph. HyperGCN \citep{yadati2019hypergcn} reduces the clique expansion graph into an incomplete graph with mediators. To directly utilize higher-order structures, HNHN \citep{dong2020hnhn} and HyperSAGE \citep{arya2020hypersage, arya2024adaptive} model the convolution layer as a message passing process with two aggregation stages. UniGNN \citep{huang2021unignn} provides a general framework for extending GNNs to hypergraphs. AllSet~\citep{chien2022you} implements the aggregation functions in message passing as universal invariant functions. HDS$^{ode}$ improves message passing by modeling it as an ODE-based dynamic system \citep{yan2024hypergraph}. Recent research explores edge-dependent message passing, where edge-dependent node messages are extracted before feeding them into the aggregation process \citep{aponte2022hypergraph, wang2023equivariant, telyatnikov2023hypergraph}. LEGCN \citep{yang2022semi} and MultiSetMixer \citep{telyatnikov2023hypergraph} have multiple representations for a single node. However, these two methods model interactions as an invariant function, which only produces the same propagating messages for different node-edge pairs. This invariance design, as shown in Section~\ref{sec:exp_ablation_study}, is insufficient to capture the edge-specific features for ENC. While most existing methods focus on node-level or edge-level tasks \citep{liu2024hypergraph, benko2024hypermagnet, chen2023teasing, behrouz2023cat} and applications \citep{yang2024behavior, wei2022dynamic, shu2024llm, cao2024hypergraph}, the ENC task remains less explored. \citet{choe2023classification} explore the ENC task and propose WHATsNet, the state-of-the-art solution based on message passing. Different from our method, it employs an aggregation after the equivariant operator to produce a single node or edge representation. While message passing has become a dominant framework for addressing various hypergraph-related tasks \cite{kim2024survey}, its single node and edge representation design suffers from the limitations of entangled edge-specific features and non-adaptive representation sizes when applied to ENC. 

\textbf{(Hyper)graph Diffusion.} Different from HGNNs with trainable parameters, hypergraph diffusion is a class of non-parametric regularization methods. (Hyper)graph diffusion \citep{gleich2015using, chamberlain2021grand} models the diffusion information as the gradients derived from minimizing a regularized target function, which regularizes the node representations within the same edge. This ensures that the learned node representations converge to the solution of the optimization target instead of an oversmoothed solution \citep{yang2021graph, thorpe2022grand++}. The technique was first introduced to achieve local and global consistency on graphs \citep{zhu2003semi, zhou2003learning}, and was then generalized to hypergraphs \citep{zhou2007learning, antelmi2023survey}. \citet{zhou2007learning} propose a regularization function by reducing the higher-order structure in a hypergraph using clique expansion. To directly utilize the higher-order structures, \citet{hein2013total} propose a regularization function based on the total variation of the hypergraph. Other regularization functions are designed to improve parallelization ability and introduce non-linearity \citep{jegelka2013reflection, tudisco2021anonlinear, tudisco2021nonlinear, liu2021strongly}. Some advanced optimization techniques have been investigated in hypergraph diffusion to improve efficiency \citep{zhang2017re, li2020quadratic}. Recent research \citep{chamberlain2021grand, li2022simple, thorpe2022grand++, gravinaanti2023anti, wang2023equivariant, wang2023hypergraph} explores the neural implementation of (hyper)graph diffusion processes, which demonstrate strong robustness against the oversmoothing issue. While hypergraph diffusion methods have shown effectiveness in various tasks like ranking \citep{li2017inhomogeneous}, motif clustering \citep{takai2020hypergraph}, and signal processing \citep{zhang2019introducing, schaub2021signal}, they are restricted to node representations and cannot address the ENC task. 

In this paper, we extend hypergraph diffusion using node-edge co-representations and propose a neural implementation. Most related to our work is ED-HNN \citep{wang2023equivariant}, which is designed to approximate any traditional hypergraph diffusion process. However, it still follows message passing with single node and edge representations. In contrast, our method directly models interactions among co-representations using multi-input multi-output equivariant functions, effectively capturing edge-specific features and achieving significant performance improvements on the ENC task. 

\section{Preliminaries}\label{sec:preliminaries}

In this section, we introduce the general notations for hypergraphs and present key concepts related to message passing-based HGNNs and traditional node-representation hypergraph diffusion, which are essential for the development of our method. 

\textbf{Notations.} Let $\sG = (\sV, \sE)$ denote a hypergraph, where $\sV = \{ v_1, v_2, \ldots, v_n \}$ represents a set of $n$ nodes, and $\sE = \{ e_1, e_2, \ldots, e_m \}$ represents a set of $m$ hyperedges. Each edge $e_i \in \sE$ is a non-empty subset of $\sV$ and can contain an arbitrary number of nodes. $\sEv = \{ e \in \sE | v \in e \}$ represents the set of edges that contain node $v$, and $d_v = \lvert \sEv \rvert$ and $d_e = \lvert e \rvert$ are the degrees of node $v$ and edge $e$, respectively. We use $v^{e}_i$ and $e^{v}_j$ to respectively denote the $i$-th node in edge $e$ and the $j$-th edge in $\sEv$. $\mX^{(0)} = [ \vx^{(0)}_{v_1}, \ldots, \vx^{(0)}_{v_n} ]^{\top}$ is the initial node feature matrix. 

Since the nodes and edges in a hypergraph are inherently unordered, it is important to ensure that the outputs of the interaction modeling functions are consistent regardless of the input ordering. This requirement is formally captured by two key properties: permutation invariance and permutation equivariance. A permutation invariant function is suitable for single-output settings, where the final output remains unchanged under input reordering. In contrast, a permutation equivariant function is well-suited for multi-output settings where permuting the inputs induces the same permutation in the multiple outputs with element-wise consistency. Here we formally give the definitions of these two properties. Let $\mathbb{S}_n$ denote the symmetric group on $n$ elements, where each action $\pi \in \mathbb{S}_n$ acts on any input matrix $\mI \in \mathbb{R}^{n \times d}$ by permuting its rows. 

\begin{dfn}[Permutation Invariance]\label{dfn:permutation_invariance}
A function $g: \mathbb{R}^{n \times d} \\ \rightarrow \mathbb{R}^{d'}$ is permutation invariant if it satisfies $g( \pi \cdot \mI ) = g(\mI)$ for all $\pi \in \mathbb{S}_n$ and $\mI \in \mathbb{R}^{n \times d}$.
\end{dfn}

\begin{dfn}[Permutation Equivariance]\label{dfn:permutation_equivariance}
A function $g: \mathbb{R}^{n \times d} \rightarrow \mathbb{R}^{n \times d'}$ is permutation equivariant if it satisfies $g( \pi \cdot \mI ) = \pi \cdot g(\mI)$ for all $\pi \in \mathbb{S}_n$ and $\mI \in \mathbb{R}^{n \times d}$.
\end{dfn}

\textbf{Message Passing-based HGNNs.} Message passing \citep{huang2021unignn, chien2022you} has become a standard framework for most HGNNs, which models the interactions in within-edge and within-node structures as two multi-input single-output aggregation functions $\fVE$ and $\fEV$: 
\begin{align}
\vz^{(t+1)}_{e} =& \fVE \big( \mX^{(t)}_e; \vz^{(t)}_{e} \big), \label{eq:message_passing_1} \\
\vx^{(t+1)}_{v} =& \fEV \big( \mZ^{(t+1)}_v; \vx^{(t)}_{v} \big). \label{eq:message_passing_2}
\end{align}
Here $\vx^{(t)}_{v}$ and $\vz^{(t)}_{e}$ are the node and edge representations in the $(t)$-th iteration. $\vx^{(0)}_{v}$ is the initial node features, and $\vz^{(0)}_{e}$ is typically initialized to a zero vector. $\mX^{(t)}_e$ denotes the representations of nodes contained in edge $e$, \ie, $\mX^{(t)}_e = \big[ \vx^{(t)}_{v^{e}_{1}}, \ldots, \vx^{(t)}_{v^{e}_{d_e}} \big]^{\top}$. Similarly, $\mZ^{(t)}_v = \big[ \vz^{(t)}_{e^{v}_{1}}, \ldots, \vz^{(t)}_{e^{v}_{d_v}} \big]^{\top}$ denotes the representations of edges containing node $v$. $\fVE$ and $\fEV$ are two invariant functions that take multiple representations from neighboring nodes or edges as inputs, but only output a single edge or node representation. 

\textbf{Hypergraph Diffusion.} Hypergraph diffusion learns node representations $\mX = \big[ \vx_{v_1}, \ldots, \vx_{v_n} \big]^{\top}$, where $\vx_{v_i} \in \mathbb{R}^{d}$, by minimizing a hypergraph-regularized target function \citep{tudisco2021nonlinear, prokopchik2022nonlinear}. For brevity, we use $\mX_{e} = \big[ \vx_{v^{e}_{1}}, \ldots, \vx_{v^{e}_{d_e}} \big]^{\top}$ to denote the representations of nodes contained in edge $e$. The target function is the weighted summation of some non-structural and structural regularization functions. The non-structural regularization function is independent of the hypergraph structure, which is typically defined as a squared loss function between the learned node representation vector $\vx_{v}$ and the node attribute vector $\va_v$ (composed of initial node features $\vx^{(0)}_{v}$ \citep{takai2020hypergraph} or observed node labels \citep{tudisco2021nonlinear}). The structural regularization functions incorporate the hypergraph structure and apply regularization to multiple node representations within the same hyperedge, which are invariant functions. Many structural regularization functions are designed by heuristics \citep{zhou2007learning, hein2013total, hayhoe2023transferable, tudisco2021anonlinear}. For instance, the clique expansion (CE) regularization functions \citep{zhou2007learning}, defined as $\Omega_{\mathrm{CE}}( \mX_e ) := \sum_{v, u \in e} \lVert \vx_v - \vx_u \rVert_2^2$, encourage the representations of all nodes in the same hyperedge to become similar. Alternatively, the total variation (TV) functions, defined as $\Omega_{\mathrm{TV}}( \mX_e ) := \max_{v, u \in e} \lVert \vx_v - \vx_u \rVert^{p} (p \in \{ 1, 2 \})$, focus on reducing the discrepancy between the most dissimilar nodes within an edge. Without making a choice among these functions, here we discuss the general form of hypergraph diffusion, which can be defined as: 


\begin{dfn}[Node-Representation Hypergraph Diffusion]\label{dfn:node_representation_hypergraph_diffusion}
Given a non-structural regularization function $\mR_v(\cdot; \va_v): \mathbb{R}^{d} \rightarrow \mathbb{R}$ and a structural regularization function $\Omega_e(\cdot): \mathbb{R}^{d_e \times d} \rightarrow \mathbb{R}$, the node-representation hypergraph diffusion learns representations by solving the following optimization problem 
\begin{equation}\label{eq:node_representation_hypergraph_diffusion}
\mX^{\star} = \argmin_{\mX} \left\{ \sum_{v \in \sV} \mR_v \big( \vx_v; \va_v \big) + \lambda \sum_{e \in \sE} \Omega_e \big( \mX_e \big) \right\}.
\end{equation}
\end{dfn}
Here $\Omega_e(\cdot)$ is also referred to as the edge regularization function. $\mX^{\star}$ denotes the matrix of all learned node representations, which can be used for predicting the node labels.

\section{Methodology}\label{sec:methodology}

In this section, we propose a new hypergraph diffusion process based on node-edge co-representations, and then develop CoNHD, a learnable neural implementation of the proposed diffusion process. This leads to the novel HGNN architecture shown in Fig.~\ref{fig:hgnn_architectures}(c).

\subsection{Co-Representation Hypergraph Diffusion}\label{sec:co_representation_hypergraph_diffusion}

In this section, we introduce the co-representation hypergraph diffusion process for modeling edge-specific features in ENC. We first formally define the ENC task following \citep{choe2023classification}.

\begin{dfn}[Edge-Dependent Node Classification (ENC)]  \label{dfn:ENC}
Given \textbf{(1)} a hypergraph $\sG = ( \sV, \sE )$, \textbf{(2)} a label space $\sC$, \textbf{(3)}~observed edge-dependent node labels for $\sE' \subset \sE$ (\ie, $y_{v, e} \in \sC$, $\forall v \in e, \forall e \in \sE'$), and \textbf{(4)} an initial node feature matrix $\mX^{(0)}$, the ENC task is to predict the unobserved edge-dependent node labels for $\sE \setminus \sE'$ (\ie, $y_{v, e} \in \sC$, $\forall v \in e, \forall e \in \sE \setminus \sE'$). 
\end{dfn}

In ENC, the label $y_{v, e}$ is associated with both node $v$ and edge $e$. We extend hypergraph diffusion to learn a co-representation $\vh_{v, e} \in \mathbb{R}^{d}$ for each node-edge pair $(v, e)$. Let $\mH = \big[ \ldots, \vh_{v, e}, \ldots \big]^{\top}$ be the matrix containing co-representation vectors of all node-edge pairs. We use $\mH_e = \big[ \vh_{v^{e}_1, e}, \ldots, \vh_{v^{e}_{d_e}, e} \big]^{\top}$ and $\mH_v = \big[ \vh_{v, e^{v}_1}, \ldots, \vh_{v, e^{v}_{d_v}} \big]^{\top}$ to denote the co-representations associated with edge $e$ and node $v$, respectively. With these notations, the co-representation hypergraph diffusion is defined as:

\begin{dfn}[Co-Representation Hypergraph Diffusion]\label{dfn:co_representation_hypergraph_diffusion}
Given a non-structural regularization function $\mR^{co}_{v,e}(\cdot; \va_{v, e}) : \mathbb{R}^{d} \rightarrow \mathbb{R}$, structural regularization functions $\Omega^{co}_{e}(\cdot) : \mathbb{R}^{d_e \times d} \rightarrow \mathbb{R}$ and $\Omega^{co}_{v}(\cdot) : \mathbb{R}^{d_v \times d} \rightarrow \mathbb{R}$, the co-representation hypergraph diffusion learns node-edge co-representations by solving the following optimization problem 
\begin{equation}\label{eq:co_representation_hypergraph_diffusion}
\begin{split}
\mH^{\star} =& \argmin_{\mH} \Bigg\{ \sum_{v \in \sV} \sum_{e \in \sEv} \mR^{\mathrm{co}}_{v,e} \big( \vh_{v, e}; \va_{v, e} \big)\\ 
& + \lambda \sum_{e \in \sE} \Omega^{\mathrm{co}}_{e} \big( \mH_e \big) + \gamma \sum_{v \in \sV} \Omega^{\mathrm{co}}_{v} \big( \mH_v \big) \Bigg\}. 
\end{split}
\end{equation}
\end{dfn}
Here $\mR^{\mathrm{co}}_{v,e}(\cdot; \va_{v, e})$ is a squared loss function following traditional hypergraph diffusion, and $\va_{v, e}$ can be any related attributes of the node-edge pair $(v, e)$ (\eg, node features or edge features). $\Omega^{\mathrm{co}}_{e}(\cdot)$ and $\Omega^{\mathrm{co}}_{v}(\cdot)$ are referred to as the co-edge and co-node regularization functions, respectively. They apply regularization to co-representations associated with the same node or edge, which can be implemented as any invariant structural regularization functions designed for traditional node-representation hypergraph diffusion \citep{zhou2007learning, hein2013total, hayhoe2023transferable}. Instead of making a choice from these handcrafted functions, in Section~\ref{sec:co_representation_neural_hypergraph_diffusion}, we will develop a neural implementation that can adaptively learn suitable diffusion dynamics from data. 

Depending on whether the regularization functions are differentiable, we can solve Eq.~\ref{eq:co_representation_hypergraph_diffusion} using one of two standard optimization methods: gradient descent (GD) or alternating direction method of multipliers (ADMM) \citep{boyd2011distributed}. We adopt the GD-based implementation throughout our experiments, while we also provide an ADMM-based implementation in our source code for completeness. We initialize $\vh_{v, e}^{(0)} = \va_{v, e}$, and solve it using GD with a step size $\alpha$: 
\begin{equation}\label{eq:gradient_descent}
\begin{split}
\vh^{(t+1)}_{v,e} =& 
\vh^{(t)}_{v,e} - 
\alpha \Big( \nabla \mR^{\mathrm{co}}_{v,e} \big( \vh^{(t)}_{v, e}; \va_{v, e} \big)\\
&+ 
\lambda \big[ \nabla \Omega^{\mathrm{co}}_{e} \big( \mH^{(t)}_{e} \big) \big]_{v} + 
\gamma \big[ \nabla \Omega^{\mathrm{co}}_{v} \big( \mH^{(t)}_{v} \big) \big]_{e} \Big),
\end{split}
\end{equation}
where $\nabla$ is the gradient operator. $[\cdot]_{v}$ and $[\cdot]_{e}$ represent the gradient vector associated with node $v$ and edge $e$, respectively. For example, $\big[ \nabla \Omega^{\mathrm{co}}_{e} (\mH^{(t)}_{e}) \big]_{v}$ represents the gradient w.r.t. $\vh^{(t)}_{v, e}$. Similar to the traditional hypergraph diffusion, we refer to $\nabla \Omega^{\mathrm{co}}_e(\cdot)$ as the \textit{co-edge diffusion operator}, which models within-edge interactions among co-representations and generates information that should ``diffuse'' to each node-edge pair. $\nabla \Omega^{\mathrm{co}}_v(\cdot)$ is referred to as the \textit{co-node diffusion operators}. We now reveal a critical property of the diffusion operators. 

\begin{prop}\label{prop:equivariant_diffusion_operators}
In the co-representation hypergraph diffusion with \textit{permutation invariant} co-edge and co-node regularization functions, the corresponding co-edge and co-node diffusion operators are \textit{permutation equivariant}. 
\end{prop}

\begin{proof}
We analyze $\nabla \Omega^{\mathrm{co}}_e(\cdot)$ here, while the analysis for $\nabla \Omega^{\mathrm{co}}_v(\cdot)$ is analogous. Since $\Omega^{\mathrm{co}}_e$ is permutation invariant, for any $\pi \in \mathbb{S}_n$ we have $\Omega^{\mathrm{co}}_e(\mP_\pi \mH) = \Omega^{\mathrm{co}}_e(\mH)$, where $\mP_{\pi}$ is the corresponding row permutation matrix of action $\pi$. Due to the linearity of the permutation action, the Jacobian matrix of $\mP_{\pi} \mH$ with respect to $\mH$ is $\mP_{\pi}$. By applying the chain rule we have
\begin{equation}
\nabla \Omega^{\mathrm{co}}_e ( \mP_{\pi} \mH ) = \mP_{\pi} \frac{\partial \Omega^{\mathrm{co}}_e (\mP_{\pi} \mH)}{\partial (\mH)} = \mP_{\pi} \frac{\partial \Omega^{\mathrm{co}}_e (\mH)}{\partial (\mH)} = \mP_{\pi} \nabla \Omega^{\mathrm{co}}_e (\mH). 
\nonumber
\end{equation}
This completes the proof. 
\end{proof}

This property shows that the diffusion operators derived from the co-representation diffusion process not only reformulate the within-edge and within-node interactions as multi-output functions, but also satisfy the equivariance property that ensures the diverse output results commute according to the input ordering. 

Next, we state the relation between the co-representation hypergraph diffusion process and the traditional node-representation hypergraph diffusion process. 

\begin{prop}\label{prop:connection_with_node_representation_hypergraph_diffusion}
The traditional node-representation hypergraph diffusion is a special case of the co-representation hypergraph diffusion, while the converse does not hold.
\end{prop}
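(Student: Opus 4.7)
The plan is to establish the statement in two independent parts: first, a constructive reduction showing that every node-representation hypergraph diffusion instance can be phrased as a co-representation hypergraph diffusion instance; second, a small counterexample showing that the reverse embedding is impossible.

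For the forward direction, given an arbitrary instance of Eq.~\ref{eq:node_representation_hypergraph_diffusion} specified by $\{\mR_v\}_{v \in \sV}$, $\{\Omega_e\}_{e \in \sE}$, and $\lambda$, I would construct a co-representation instance as follows. Set the node-edge attributes $\va_{v,e} := \va_v$ for every $e \in \sEv$; define the non-structural regularizer $\mR_{v,e}(\vh; \va_{v,e}) := \tfrac{1}{d_v} \mR_v(\vh; \va_v)$ so that summation over $e \in \sEv$ reproduces $\mR_v$; keep the edge regularizers identical, $\Omega_e(\mH_e) := \Omega_e(\mH_e)$, noting that $\mH_e$ has exactly the same shape and row ordering as $\mX_e$; and choose the node regularizer $\Omega_v$ to enforce agreement across the rows of $\mH_v$. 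A concrete choice is $\Omega_v(\mH_v) = \sum_{e, e' \in \sEv} \lVert \vh_{v,e} - \vh_{v,e'} \rVert_2^2$ with $\gamma \to \infty$ (both the differentiable GD path and the ADMM path in the paper admit this limit), or equivalently the $\{0, +\infty\}$-valued consensus indicator. Both are permutation invariant, so Proposition~\ref{prop:equivariant_diffusion_operators} applies. On the resulting feasible set $\{ \mH : \vh_{v,e} = \vh_{v,e'} \text{ for all } e, e' \in \sEv \}$ we can identify $\vh_{v,e}$ with a single $\vx_v$; substituting back, the objective in Eq.~\ref{eq:co_representation_hypergraph_diffusion} reduces term-by-term to the objective in Eq.~\ref{eq:node_representation_hypergraph_diffusion}, so any optimum $\mX^\star$ lifts to $\mH^\star$ with $\vh^\star_{v,e} = \vx^\star_v$, and the two problems are equivalent.

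For the reverse direction I would exhibit a minimal counterexample. Take any hypergraph containing a node $v$ with $d_v \geq 2$, say $\sEv \supseteq \{e_1, e_2\}$, set $\lambda = \gamma = 0$ so that the structural regularizers do not couple anything, and let $\mR_{v,e}(\vh; \va_{v,e}) := \lVert \vh - \va_{v,e} \rVert_2^2$ with chosen attributes $\va_{v,e_1} \neq \va_{v,e_2}$. Then the unique optimum of Eq.~\ref{eq:co_representation_hypergraph_diffusion} satisfies $\vh^\star_{v,e_1} = \va_{v,e_1} \neq \va_{v,e_2} = \vh^\star_{v,e_2}$. However, any instance of Eq.~\ref{eq:node_representation_hypergraph_diffusion} produces a \emph{single} vector $\vx^\star_v$ per node, so no choice of $\{\mR_v\}, \{\Omega_e\}, \lambda$ can reproduce two distinct outputs at $v$. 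Hence the containment is strict.

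I expect the main obstacle to be the forward direction, specifically making the consensus-enforcing $\Omega_v$ rigorous without leaving the framework defined in Definition~\ref{dfn:co_representation_hypergraph_diffusion}. The $\gamma \to \infty$ route requires a short penalty/limit argument (standard, but worth stating carefully so the argmin is attained and unique when the original node-representation problem is), while the indicator-function route requires pointing to Eqs.~\ref{eq:admm_1}–\ref{eq:admm_3} to justify that non-differentiable regularizers are permitted. The reverse direction is essentially a degree-of-freedom count and should be almost immediate once the counterexample is written down.
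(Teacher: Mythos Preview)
Your proposal is correct and follows essentially the same route as the paper: the forward direction uses the identical construction $\mR_{v,e} = \tfrac{1}{d_v}\mR_v$, the CE-type pairwise squared-difference regularizer for $\Omega_v$, and the $\gamma \to \infty$ exterior-penalty argument to enforce consensus, while the reverse direction is the same degree-of-freedom observation that a single $\vx_v$ cannot encode distinct $\vh_{v,e_1} \neq \vh_{v,e_2}$. Your explicit counterexample with $\lambda = \gamma = 0$ and quadratic $\mR_{v,e}$ is more concrete than the paper's one-line version, but the underlying idea is the same.
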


\begin{proof}
For each $v \in \sV$, we introduce a set of auxiliary variables $\{ \vh_{v, e_i} | e_i \in \sEv \}$, satisfying $\vh_{v, e_i} = \vh_{v, e_j}$ for any $e_i, e_j \in \sEv$. Let \( \mH \) denote the collection of all auxiliary variables. Then the original problem in Eq.~\ref{eq:node_representation_hypergraph_diffusion} can be reformulated as the following constrained optimization problem:
\begin{equation}\label{eq:hypergraph_diffusion_with_constraints}
\begin{split}
\argmin_{\mH} \quad & \Bigg\{ \sum_{v \in \sV} \sum_{e \in \sEv} \frac{1}{d_v} \mR_{v}(\vh_{v, e}; \va_{v}) + \lambda \sum_{e \in \sE} \Omega_{e}(\mH_e) \Bigg\}, \\
\mathrm{s.t.} \quad & \vh_{v, e_i} = \vh_{v, e_j}, \quad \forall v \in \sV,\, \forall e_i, e_j \in \sEv.  
\end{split}
\end{equation}
Let \( \mH^\star \) be an optimal solution to Eq.~\eqref{eq:hypergraph_diffusion_with_constraints}. Then the solution to the original problem satisfies \( \vx_v^\star = \vh_{v,e}^\star \) for any \( e \in \sEv \).

We can set $\mR^{\mathrm{co}}_{v,e}(\cdot; \va_{v, e}) = \frac{1}{d_v} \mR_{v}(\cdot; \va_{v})$ and $\Omega^{\mathrm{co}}_{e} (\cdot) = \Omega_{e} (\cdot)$ in Eq.~\ref{eq:co_representation_hypergraph_diffusion}, and set $\Omega^{\mathrm{co}}_{v} (\cdot)$ as the CE regularization functions \citep{zhou2007learning}, \ie, $\Omega^{\mathrm{co}}_{v} \big( \mH_v \big) = \Omega_{\mathrm{CE}}(\mH_v) = \sum_{e_i, e_j \in \sEv} \lVert \vh_{v, e_i} - \vh_{v, e_j} \rVert_2^2$. Then Eq.~\ref{eq:co_representation_hypergraph_diffusion} can be reformulated as follows: 
\begin{equation}\label{eq:special_case_co_representation_hypergraph_diffusion}
\begin{split}
& \argmin_{\mH} \Bigg\{ \sum_{v \in \sV} \sum_{e \in \sEv} \frac{1}{d_v} \mR_{v}(\vh_{v, e}; \va_{v}) \\
& + \lambda \sum_{e \in \sE} \Omega_{e}(\mH_e) + \gamma \sum_{v \in \sV} \Omega_{\mathrm{CE}}(\mH_v) \Bigg\}. 
\end{split}
\end{equation}
Here $\Omega_{\mathrm{CE}}(\cdot)$ is exactly the exterior penalty function \citep{yeniay2005penalty} for the given equality constraints in Eq.~\ref{eq:hypergraph_diffusion_with_constraints}. Thus as $\gamma \rightarrow \infty$, Eq.~\ref{eq:special_case_co_representation_hypergraph_diffusion} yields the same optimal solutions as Eq.~\ref{eq:hypergraph_diffusion_with_constraints}. The converse does not hold, since the node-representation hypergraph diffusion enforces a single node representation for each node and cannot accommodate multiple co-representations. This completes the proof.
\end{proof}

Node-representation hypergraph diffusion is equivalent to imposing a strict constraint that all the co-representations $\vh_{v, e_i}$ associated with the same node $v$ must be identical, resulting in a single unified node representation. Our method relaxes this hard constraint by co-node regularization functions, allowing multiple co-representations associated with the same node to be different while still being constrained by certain regularization terms.

\subsection{Neural Implementation}\label{sec:co_representation_neural_hypergraph_diffusion}

In this section, we propose \underline{\textbf{Co}}-representation \underline{\textbf{N}}eural \underline{\textbf{H}}ypergraph \underline{\textbf{D}}iffusion (\textbf{CoNHD}), which is a neural implementation of the diffusion process defined in Definition~\ref{dfn:co_representation_hypergraph_diffusion} without the need for handcrafting regularization functions.

Since $\mR^{co}_{v,e}(\vh^{(t)}_{v, e}; \va_{v, e}) = \frac{1}{2} \lVert \vh^{(t)}_{v, e} - \va_{v, e} \rVert^2$ is a squared loss function, we have $\nabla \mR^{\mathrm{co}}_{v,e} ( \vh^{(t)}_{v, e}; \va_{v, e} ) = \vh^{(t)}_{v, e} -\va_{v, e}$. Eq.~\ref{eq:gradient_descent} can be rewritten as:  

\begin{equation}\label{eq:gradient_descent_extend}
\begin{split}
\vh^{(t+1)}_{v,e} =& (1-\alpha) \vh^{(t)}_{v,e} - \alpha \lambda \big[ \nabla \Omega^{\mathrm{co}}_{e} \big( \mH^{(t)}_{e} \big) \big]_{v} \\
&- \alpha \gamma \big[ \nabla \Omega^{\mathrm{co}}_{v} \big( \mH^{(t)}_{v} \big) \big]_{e} + \alpha \va_{v, e}.
\end{split}
\end{equation}
Therefore, $\vh^{(t+1)}_{v,e}$ is a linear combination of the co-representation in the last step $\vh^{(t)}_{v,e}$, within-edge and within-node diffusion information $\big[ \nabla \Omega^{\mathrm{co}}_{e} \big( \mH^{(t)}_{e} \big) \big]_{v}$ and $\big[ \nabla \Omega^{\mathrm{co}}_{v} \big( \mH^{(t)}_{v} \big) \big]_{e}$, and initial features $\vh^{(0)}_{v, e}=\va_{v, e}$. To avoid handcrafting regularization functions and manual choice of the factors $\alpha$, $\lambda$, and $\gamma$, we define two networks, $\phi$ and $\varphi$, to approximate the two interaction processes, and a linear layer $\psi$ to approximate the co-representation update process. The $(t+1)$-th layer can be represented as: 
\begin{align} 
\mM^{(t+1)}_e =& \phi \big( \mH^{(t)}_e \big),\, \mM'^{(t+1)}_v = \varphi \big( \mH^{(t)}_v \big), \label{eq:gd_based_1} \\ 
\vh^{(t+1)}_{v, e} =& \psi \big( \big[ \vh^{(t)}_{v, e}, \vm^{(t+1)}_{v, e}, \vm'^{(t+1)}_{v, e}. \vh^{(0)}_{v, e} \big] \big). \label{eq:gd_based_2}
\end{align}
Here, $\phi$ and $\varphi$ serve as the neural implementation of the diffusion operators, which should satisfy the permutation equivariance property of the co-edge and co-node diffusion operators stated in Proposition~\ref{prop:equivariant_diffusion_operators}. For the implementation of the diffusion operators, we explore two popular equivariant neural networks, UNB \citep{segol2020on, wang2023equivariant} and ISAB \citep{chien2022you}. Notably, CoNHD is a general HGNN framework allowing different equivariant network implementations for the diffusion operators, not limited to the two investigated in this work. $\mM^{(t)}_e = \Big[ \vm^{(t)}_{v^{e}_1, e}, \ldots, \vm^{(t)}_{v^{e}_{d_e}, e} \Big]^{\top}$ and $\mM'^{(t)}_v = \Big[ \vm'^{(t)}_{v, e^{v}_1}, \ldots, \vm'^{(t)}_{v, e^{v}_{d_v}} \Big]$ are the within-edge and within-node diffusion information generated using the neural diffusion operators $\phi$ and $\varphi$. The function $\psi(\cdot)$, implemented as a linear layer, collects diffusion information and updates the co-representations.

Previous research based on traditional hypergraph diffusion only explores modeling the composition of within-edge and within-node interactions as an equivariant function, while each interaction is still an invariant aggregation function \citep{wang2023equivariant}. Differently, WHATsNet \citep{choe2023classification} explores utilizing the equivariant module in both interactions. However, the multiple outputs serve only as intermediate results, with an aggregation module applied at the end, where the composition is still an invariant aggregation function and only a single node or edge representation is updated. In contrast, our method reformulates within-edge and within-node interactions as two distinct equivariant functions, ensuring diverse update information for different node-edge pairs, which is helpful for modeling node/edge-specific features and improves ENC performance as shown in Table~\ref{tab:ablation_study}. 

To demonstrate the expressiveness of CoNHD, we compare it with the message passing framework defined in Eq.~\ref{eq:message_passing_1}-\ref{eq:message_passing_2}, which, as previously noted, is followed by most HGNNs \citep{huang2021unignn, chien2022you}, including the state-of-the-art ENC solution WHATsNet \citep{choe2023classification}. Following \citep{choe2023classification}, we regard the concatenation of node and edge representations as the final embeddings, which can be used to predict ENC labels. This leads to the following proposition. 

\begin{prop}\label{prop:expressiveness}
With the same embedding dimension, CoNHD is expressive enough to represent the message passing framework, while the converse does not hold. 
\end{prop}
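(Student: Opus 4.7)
The proposition splits into an inclusion and a strictness part, which I would prove separately.

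\textbf{Inclusion (CoNHD can represent message passing).} My plan is an explicit simulation. I would let the CoNHD co-representation carry both halves of the message-passing pair embedding, $\vh_{v,e}^{(t)} = [\vx_v^{(t)}; \vz_e^{(t)}]$, with $\vh^{(0)}_{v,e}$ initialized from the same features message passing uses, and simulate each message-passing layer by two CoNHD layers in order to respect the sequential edge-then-node update of Eq.~\ref{eq:message_passing_1}--\ref{eq:message_passing_3}. In the first sub-layer, $\phi$ reads only the node halves of $\mH_e^{(t)}$, forms the invariant aggregate $\fVE(\mX_e^{(t)}; \vz_e^{(t)})$, and broadcasts it into the edge half of every output row (with $\varphi$ trivial); $\psi$ then overwrites the edge half of $\vh_{v,e}^{(t)}$ with this new value and passes the node half through unchanged. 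In the second sub-layer, $\varphi$ analogously implements $\fEV$ and $\psi$ overwrites the node half, using its dependence on $\vh^{(0)}_{v,e}$ to realize $\fskip$. Broadcasting a constant across rows is permutation equivariant (in fact invariant), so it is realizable by the UNB or ISAB implementations the paper already considers. By induction on $t$, after $2T$ CoNHD layers $\vh^{(T)}_{v,e} = [\vx^{(T)}_v; \vz^{(T)}_e]$, and any message-passing predictor is recovered by the identical readout applied to $\vh^{(T)}_{v,e}$.

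\textbf{Strictness (message passing cannot represent CoNHD).} The key observation is that any message-passing instance produces pair embeddings of the rigid product form $[\vx_v^{(T)}; \vz_e^{(T)}]$, in which the first half depends only on $v$ and the second only on $e$. Hence for any node $v$ incident to two distinct edges $e_1, e_2$ and any other node $u \in e_1$, the three pair embeddings at $(v, e_1), (v, e_2), (u, e_1)$ are forced to share coordinates across pairs: the first halves of the first two agree, and the second halves of the first and third agree. CoNHD imposes no such sharing, because each $\vh^{(T)}_{v,e}$ is an independent co-representation. To produce a witness, I would exhibit a small concrete hypergraph together with an explicit choice of equivariant $\phi, \varphi$ and linear $\psi$ whose resulting co-representations for these three pairs violate the sharing constraints above, yielding a function in CoNHD's hypothesis class that no message-passing instance at the same embedding dimension can realize.

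\textbf{Main obstacle.} The hardest part is making the strictness tight in the presence of an arbitrarily expressive readout on top of message passing, which could compensate for pre-readout factorization whenever the concatenations $[\vx^{(T)}_v; \vz^{(T)}_e]$ remain pairwise distinct. I would resolve this either by stating the separation at the embedding level (or under an affine readout, where the factorized form becomes a genuine rank/decomposability restriction), or by a parameter-counting argument: message-passing embeddings are drawn from an $O((n+m)d)$-dimensional family, whereas CoNHD's lie in an $O\big(\sum_v d_v \cdot d\big)$-dimensional family, which is strictly larger for any non-trivial hypergraph. The inclusion direction, by contrast, is essentially mechanical once the block decomposition of $\vh_{v,e}$ is fixed.
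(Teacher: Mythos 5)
Your proposal takes essentially the same route as the paper for both directions. The inclusion argument is identical: block-decompose the co-representation as $\vh_{v,e} = [\vx_v; \vz_e]$, spend two CoNHD layers per message-passing layer (first using $\phi$ to write the nodes-to-edge aggregate into the edge half with $\varphi$ zeroed out, then using $\varphi$ to write the edges-to-node aggregate into the node half with $\psi$ realizing $\fskip$), and broadcast the single aggregate across rows, which is trivially equivariant. The strictness argument also matches the paper's core observation: message-passing pair embeddings are forced into the factorized form $[\vx_v; \vz_e]$, imposing cross-pair sharing constraints that CoNHD's per-pair co-representations need not satisfy.

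Where you go beyond the paper is the candid discussion of the readout issue. The paper's strictness half is terse — it simply asserts that $\phi$ can produce distinct diffusion outputs per pair while message passing cannot, without worrying that a sufficiently expressive readout on top of pairwise-distinct concatenations could undo the pre-readout factorization. You identify this correctly and offer two repairs: state the separation at the embedding level (which is in fact what the paper tacitly does), or count parameters ($O((n+m)d)$ for message passing versus $O(d\sum_v d_v)$ for CoNHD). Either would tighten the paper's argument. The paper also has what looks like a typo in this direction (referring to "the first $\frac{d}{2}$ dimensions" being constrained by the edge representation, when the block decomposition puts the edge half in the second $\frac{d}{2}$); your sharing-constraint phrasing is cleaner. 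Your proposal stops short of actually exhibiting the witness hypergraph and operator choices, but so, effectively, does the paper; the plan you describe would carry through.
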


\begin{proof}
We provide a proof sketch here. We first show that CoNHD can express any models following the message passing framework. By initializing $\vh^{(0)}_{v,e} = [\vx^{(0)}_v, \vz^{(0)}_e]$, we can alternately update edge and node representations using two CoNHD layers. Specifically, the first aggregates 
nodes-to-edge messages via $\phi$ to form the outputs $\vh^{(2t+1)}_{v,e} = [\vx^{(t)}_v, \vz^{(t+1)}_e]$. The second aggregates edges-to-node messages via $\varphi$ to form the outputs $\vh^{(2(t+1))}_{v,e} = [\vx^{(t+1)}_v, \vz^{(t+1)}_e]$. Since $\phi$ and $\varphi$ can be implemented as universal equivariant functions like UNB, CoNHD can approximate the same updates as $\fVE$ and $\fEV$ in Eq.~\ref{eq:message_passing_1}-\ref{eq:message_passing_2}. Conversely, message passing models cannot express CoNHD, as they generate only one representation for each node or edge, whereas CoNHD allows multiple node-edge co-representations for each node or edge.
\end{proof}

Proposition~\ref{prop:expressiveness} shows that CoNHD is more expressive than all methods following the message passing framework. Notably, this gain in expressiveness does not incur additional complexity, as further analyzed in the next section.

\subsection{Complexity Analysis}\label{sec:complexity_analysis}

In this section, we analyze the time and space complexity of CoNHD compared to methods based on the dominant message passing framework, including WHATsNet.

\textbf{Time Complexity.} 
CoNHD computes within-edge and within-node interactions using UNB or ISAB operators, both with linear computational complexity relative to the input size. The complexity of both interactions is $\bigO ( \sum_{e \in \sE} ( d_e d^2 ) + \sum_{v \in \sV} ( d_v d^2 ) ) = \bigO ( d^2 \sum_{e \in \sE} d_e )$, where $d$ is the hidden size. For the update function, the complexity is $\bigO ( d^2 \sum_{e \in \sE} d_e )$. Therefore, the overall complexity of CoNHD is $\bigO ( L d^2 \sum_{e \in \sE} d_e )$, where $L$ is the number of layers. The overall time complexity is linear to the number of node-edge pairs, \ie, $\sum_{e \in \sE} d_e$, which is the same as other HGNNs within the message passing framework. For example, WHATsNet has the complexity of $\bigO(L \cdot (2 \cdot d^2 \sum_{e \in \sE} d_e + \sum_{e \in \mathcal{E}} d^2d_e + \sum_{v \in \mathcal{V}} d^2d_v))$, which is of the same order as that of CoNHD after ignoring constant factors and low-order terms. However, WHATsNet incurs higher runtime in practice due to additional edge-dependent feature extraction and aggregation steps in each layer, as well as the involvement of indirectly connected neighbors during the message passing process.

\textbf{Space Complexity.} Since the number of input co-representations in each layer of our model depends on the number of node-edge pairs, \ie, $\sum_{e \in \sE} d_e$, the size of the inputs is $\bigO ( d \sum_{e \in \sE} d_e )$. For within-edge and within-node interactions, both UNB and ISAB implementations utilize MLPs to perform feature transformation, where the size of the weights is $\bigO( d^2 )$. The sizes of the outputs for the within-edge and within-node interactions are $\bigO ( d \sum_{e \in \sE} d_e )$. In the update process, the concatenated input is a $4d$-dimensional vector. This is then passed through a linear layer to output the updated co-representations, where the weight size is $\bigO( 4d^2 )$. Therefore, the total space complexity of $L$ layers after removing the constant factors is $\bigO( L (d^2 + d \sum_{e \in \sE} d_e) ) = \bigO( Ld ( d + \sum_{e \in \sE} d_e ) )$. The overall space complexity is linear to the number of node-edge pairs in the input hypergraph, \ie, $\sum_{e \in \sE} d_e$. which is the same as those edge-dependent message passing methods, including WHATsNet \citep{choe2023classification}, which generate multiple edge-dependent node representations for each node in the calculation process.


\section{Experiments}\label{sec:experiments}

\begin{table*}[t]
\setlength\tabcolsep{4pt}
\caption{\textbf{Performance of edge-dependent node classification.} \textbf{Bold} numbers represent the best results, while \underline{underlined} numbers indicate the second-best. ``O.O.M.'' means ``out of memory''. \setlength{\fboxsep}{1pt}\colorbox{lightgray}{Shaded cells} indicate that our method significantly outperforms the best baseline (p-value $<$ 0.05, based on the Wilcoxon signed-rank test). ``A.R.'' denotes the average ranking among all datasets.}\label{tab:ENC_performance}
\centering
\begin{adjustbox}{max width = 0.87\linewidth}
\begin{tabular}{c|cccccccccc|c}
\toprule
\multirow{2}{*}{Method} & \multicolumn{2}{c}{Email-Enron} & \multicolumn{2}{c}{Email-Eu} & \multicolumn{2}{c}{Stack-Biology} & \multicolumn{2}{c}{Stack-Physics} & \multicolumn{2}{c|}{Coauth-DBLP} & \multirow{2}{*}{\shortstack[c]{A.R. of\\Micro-F1}} \\
& Micro-F1 & Macro-F1 & Micro-F1 & Macro-F1 & Micro-F1 & Macro-F1 & Micro-F1 & Macro-F1 & Micro-F1 & Macro-F1 & \\ 
\midrule
GraphSAGE & 0.775 \scriptsize{$\pm$ 0.005} & 0.714 \scriptsize{$\pm$ 0.007} & 0.658 \scriptsize{$\pm$ 0.001} & 0.564 \scriptsize{$\pm$ 0.005} & 0.689 \scriptsize{$\pm$ 0.010} & 0.598 \scriptsize{$\pm$ 0.014} & 0.660 \scriptsize{$\pm$ 0.011} & 0.523 \scriptsize{$\pm$ 0.018} & 0.474 \scriptsize{$\pm$ 0.002} & 0.401 \scriptsize{$\pm$ 0.008} & 11.7 \\
GAT & 0.736 \scriptsize{$\pm$ 0.056} & 0.611 \scriptsize{$\pm$ 0.103} & 0.618 \scriptsize{$\pm$ 0.002} & 0.580 \scriptsize{$\pm$ 0.024} & 0.692 \scriptsize{$\pm$ 0.015} & 0.628 \scriptsize{$\pm$ 0.010} & 0.725 \scriptsize{$\pm$ 0.024} & 0.636 \scriptsize{$\pm$ 0.043} & 0.575 \scriptsize{$\pm$ 0.005} & 0.558 \scriptsize{$\pm$ 0.007} & 8.1 \\
ADGN & 0.790 \scriptsize{$\pm$ 0.001} & 0.723 \scriptsize{$\pm$ 0.001} & 0.667 \scriptsize{$\pm$ 0.001} & 0.622 \scriptsize{$\pm$ 0.006} & 0.714 \scriptsize{$\pm$ 0.002} & 0.651 \scriptsize{$\pm$ 0.001} & 0.686 \scriptsize{$\pm$ 0.014} & 0.537 \scriptsize{$\pm$ 0.019} & 0.505 \scriptsize{$\pm$ 0.006} & 0.440 \scriptsize{$\pm$ 0.020} & 8.6 \\
\midrule
HyperGNN & 0.725 \scriptsize{$\pm$ 0.004} & 0.674 \scriptsize{$\pm$ 0.003} & 0.633 \scriptsize{$\pm$ 0.001} & 0.533 \scriptsize{$\pm$ 0.008} & 0.689 \scriptsize{$\pm$ 0.002} & 0.624 \scriptsize{$\pm$ 0.007} & 0.686 \scriptsize{$\pm$ 0.004} & 0.630 \scriptsize{$\pm$ 0.002} & 0.540 \scriptsize{$\pm$ 0.004} & 0.519 \scriptsize{$\pm$ 0.002} & 10.0 \\
HAT & 0.817 \scriptsize{$\pm$ 0.001} & 0.753 \scriptsize{$\pm$ 0.004} & 0.669 \scriptsize{$\pm$ 0.001} & 0.638 \scriptsize{$\pm$ 0.002} & 0.661 \scriptsize{$\pm$ 0.005} & 0.606 \scriptsize{$\pm$ 0.005} & 0.708 \scriptsize{$\pm$ 0.005} & 0.643 \scriptsize{$\pm$ 0.009} & 0.503 \scriptsize{$\pm$ 0.004} & 0.483 \scriptsize{$\pm$ 0.006} & 7.9 \\
UniGCNII & 0.734 \scriptsize{$\pm$ 0.010} & 0.656 \scriptsize{$\pm$ 0.010} & 0.630 \scriptsize{$\pm$ 0.005} & 0.565 \scriptsize{$\pm$ 0.013} & 0.610 \scriptsize{$\pm$ 0.004} & 0.433 \scriptsize{$\pm$ 0.007} & 0.671 \scriptsize{$\pm$ 0.022} & 0.492 \scriptsize{$\pm$ 0.016} & 0.497 \scriptsize{$\pm$ 0.003} & 0.476 \scriptsize{$\pm$ 0.002} & 13.5 \\
AllSet & 0.796 \scriptsize{$\pm$ 0.014} & 0.719 \scriptsize{$\pm$ 0.020} & 0.666 \scriptsize{$\pm$ 0.005} & 0.624 \scriptsize{$\pm$ 0.021} & 0.571 \scriptsize{$\pm$ 0.054} & 0.446 \scriptsize{$\pm$ 0.081} & 0.728 \scriptsize{$\pm$ 0.039} & 0.646 \scriptsize{$\pm$ 0.046} & 0.495 \scriptsize{$\pm$ 0.038} & 0.487 \scriptsize{$\pm$ 0.040} & 9.0 \\
HDS$^{ode}$ & 0.805 \scriptsize{$\pm$ 0.001} & 0.740 \scriptsize{$\pm$ 0.006} & 0.651 \scriptsize{$\pm$ 0.000} & 0.577 \scriptsize{$\pm$ 0.005} & 0.708 \scriptsize{$\pm$ 0.001} & 0.643 \scriptsize{$\pm$ 0.004} & 0.737 \scriptsize{$\pm$ 0.001} & 0.635 \scriptsize{$\pm$ 0.008} & 0.558 \scriptsize{$\pm$ 0.001} & 0.550 \scriptsize{$\pm$ 0.002} & 7.3 \\ 
LEGCN & 0.783 \scriptsize{$\pm$ 0.001} & 0.728 \scriptsize{$\pm$ 0.007} & 0.639 \scriptsize{$\pm$ 0.001} & 0.535 \scriptsize{$\pm$ 0.004} & 0.668 \scriptsize{$\pm$ 0.002} & 0.572 \scriptsize{$\pm$ 0.006} & 0.701 \scriptsize{$\pm$ 0.003} & 0.575 \scriptsize{$\pm$ 0.018} & 0.499 \scriptsize{$\pm$ 0.003} & 0.490 \scriptsize{$\pm$ 0.002} & 7.5 \\
MultiSetMixer & 0.818 \scriptsize{$\pm$ 0.001} & 0.755 \scriptsize{$\pm$ 0.005} & 0.670 \scriptsize{$\pm$ 0.001} & 0.636 \scriptsize{$\pm$ 0.005} & 0.709 \scriptsize{$\pm$ 0.001} & 0.643 \scriptsize{$\pm$ 0.003} & 0.754 \scriptsize{$\pm$ 0.001} & 0.679 \scriptsize{$\pm$ 0.004} & 0.559 \scriptsize{$\pm$ 0.001} & 0.554 \scriptsize{$\pm$ 0.001} & 6.0 \\
HNN & 0.763 \scriptsize{$\pm$ 0.003} & 0.679 \scriptsize{$\pm$ 0.007} & O.O.M. & O.O.M. & 0.618 \scriptsize{$\pm$ 0.015} & 0.568 \scriptsize{$\pm$ 0.013} & 0.683 \scriptsize{$\pm$ 0.005} & 0.617 \scriptsize{$\pm$ 0.005} & 0.488 \scriptsize{$\pm$ 0.006} & 0.482 \scriptsize{$\pm$ 0.006} & 12.4 \\ 
ED-HNN & 0.778 \scriptsize{$\pm$ 0.001} & 0.713 \scriptsize{$\pm$ 0.004} & 0.648 \scriptsize{$\pm$ 0.001} & 0.558 \scriptsize{$\pm$ 0.004} & 0.688 \scriptsize{$\pm$ 0.005} & 0.506 \scriptsize{$\pm$ 0.002} & 0.726 \scriptsize{$\pm$ 0.002} & 0.617 \scriptsize{$\pm$ 0.006} & 0.514 \scriptsize{$\pm$ 0.016} & 0.484 \scriptsize{$\pm$ 0.024} & 9.3 \\
WHATsNet & 0.826 \scriptsize{$\pm$ 0.001} & 0.761 \scriptsize{$\pm$ 0.003} & 0.671 \scriptsize{$\pm$ 0.000} & 0.645 \scriptsize{$\pm$ 0.003} & 0.742 \scriptsize{$\pm$ 0.002} & 0.685 \scriptsize{$\pm$ 0.003} & 0.770 \scriptsize{$\pm$ 0.003} & 0.707 \scriptsize{$\pm$ 0.004} & 0.604 \scriptsize{$\pm$ 0.003} & 0.592 \scriptsize{$\pm$ 0.004} & 5.2 \\
\midrule
CoNHD (UNB) (\textit{ours}) & \cellcolor{lightgray} \underline{0.905 \scriptsize{$\pm$ 0.001}} & \cellcolor{lightgray} \underline{0.858 \scriptsize{$\pm$ 0.004}} & \cellcolor{lightgray} \underline{0.708 \scriptsize{$\pm$ 0.001}} & \cellcolor{lightgray} \underline{0.689 \scriptsize{$\pm$ 0.001}} & \cellcolor{lightgray} \underline{0.748 \scriptsize{$\pm$ 0.003}} & \cellcolor{lightgray} \underline{0.694 \scriptsize{$\pm$ 0.005}} & \cellcolor{lightgray} \underline{0.776 \scriptsize{$\pm$ 0.001}} & \textbf{0.712 \scriptsize{$\pm$ 0.005}} & \cellcolor{lightgray} \textbf{0.620 \scriptsize{$\pm$ 0.002}} & \cellcolor{lightgray} \textbf{0.604 \scriptsize{$\pm$ 0.002}} & \underline{1.9} \\
CoNHD (ISAB) (\textit{ours}) & \cellcolor{lightgray} \textbf{0.911 \scriptsize{$\pm$ 0.001}} & \cellcolor{lightgray} \textbf{0.871 \scriptsize{$\pm$ 0.002}} & \cellcolor{lightgray} \textbf{0.709 \scriptsize{$\pm$ 0.001}} & \cellcolor{lightgray} \textbf{0.690 \scriptsize{$\pm$ 0.002}} & \cellcolor{lightgray} \textbf{0.749 \scriptsize{$\pm$ 0.002}} & \cellcolor{lightgray} \textbf{0.695 \scriptsize{$\pm$ 0.004}} & \cellcolor{lightgray} \textbf{0.777 \scriptsize{$\pm$ 0.001}} & \underline{0.710 \scriptsize{$\pm$ 0.004}} & \cellcolor{lightgray} \underline{0.619 \scriptsize{$\pm$ 0.002}} & \cellcolor{lightgray} \textbf{0.604 \scriptsize{$\pm$ 0.003}} & \textbf{1.1} \\
\bottomrule 
\toprule 
\multirow{2}{*}{Method} & \multicolumn{2}{c}{Coauth-AMiner} & \multicolumn{2}{c}{Cora-Outsider} & \multicolumn{2}{c}{DBLP-Outsider} & \multicolumn{2}{c}{Citeseer-Outsider} & \multicolumn{2}{c|}{Pubmed-Outsider} & \multirow{2}{*}{\shortstack[c]{A.R. of\\Macro-F1}} \\
& Micro-F1 & Macro-F1 & Micro-F1 & Macro-F1 & Micro-F1 & Macro-F1 & Micro-F1 & Macro-F1 & Micro-F1 & Macro-F1 \\ 
\midrule
GraphSAGE & 0.441 \scriptsize{$\pm$ 0.013} & 0.398 \scriptsize{$\pm$ 0.012} & 0.520 \scriptsize{$\pm$ 0.009} & 0.518 \scriptsize{$\pm$ 0.007} & 0.490 \scriptsize{$\pm$ 0.029} & 0.427 \scriptsize{$\pm$ 0.083} & 0.704 \scriptsize{$\pm$ 0.005} & 0.704 \scriptsize{$\pm$ 0.005} & 0.677 \scriptsize{$\pm$ 0.003} & 0.663 \scriptsize{$\pm$ 0.002} & 11.8 \\
GAT & 0.623 \scriptsize{$\pm$ 0.006} & 0.608 \scriptsize{$\pm$ 0.009} & 0.531 \scriptsize{$\pm$ 0.009} & 0.521 \scriptsize{$\pm$ 0.008} & 0.563 \scriptsize{$\pm$ 0.003} & 0.548 \scriptsize{$\pm$ 0.003} & 0.704 \scriptsize{$\pm$ 0.011} & 0.702 \scriptsize{$\pm$ 0.011} & 0.677 \scriptsize{$\pm$ 0.003} & 0.670 \scriptsize{$\pm$ 0.002} & 7.9 \\
ADGN & 0.452 \scriptsize{$\pm$ 0.009} & 0.415 \scriptsize{$\pm$ 0.014} & 0.533 \scriptsize{$\pm$ 0.007} & 0.524 \scriptsize{$\pm$ 0.005} & 0.559 \scriptsize{$\pm$ 0.005} & 0.548 \scriptsize{$\pm$ 0.001} & 0.706 \scriptsize{$\pm$ 0.008} & 0.705 \scriptsize{$\pm$ 0.008} & 0.669 \scriptsize{$\pm$ 0.003} & 0.667 \scriptsize{$\pm$ 0.002} & 9.0 \\
\midrule 
HyperGNN & 0.566 \scriptsize{$\pm$ 0.002} & 0.551 \scriptsize{$\pm$ 0.004} & 0.532 \scriptsize{$\pm$ 0.015} & 0.528 \scriptsize{$\pm$ 0.013} & 0.571 \scriptsize{$\pm$ 0.005} & 0.566 \scriptsize{$\pm$ 0.005} & 0.696 \scriptsize{$\pm$ 0.006} & 0.696 \scriptsize{$\pm$ 0.006} & 0.658 \scriptsize{$\pm$ 0.003} & 0.654 \scriptsize{$\pm$ 0.002} & 9.5 \\
HAT & 0.543 \scriptsize{$\pm$ 0.002} & 0.533 \scriptsize{$\pm$ 0.003} & 0.548 \scriptsize{$\pm$ 0.015} & 0.544 \scriptsize{$\pm$ 0.017} & 0.588 \scriptsize{$\pm$ 0.002} & 0.586 \scriptsize{$\pm$ 0.002} & 0.691 \scriptsize{$\pm$ 0.018} & 0.690 \scriptsize{$\pm$ 0.019} & 0.676 \scriptsize{$\pm$ 0.003} & 0.673 \scriptsize{$\pm$ 0.003} & 6.8 \\
UniGCNII & 0.520 \scriptsize{$\pm$ 0.001} & 0.507 \scriptsize{$\pm$ 0.001} & 0.519 \scriptsize{$\pm$ 0.019} & 0.509 \scriptsize{$\pm$ 0.023} & 0.540 \scriptsize{$\pm$ 0.004} & 0.537 \scriptsize{$\pm$ 0.006} & 0.674 \scriptsize{$\pm$ 0.018} & 0.671 \scriptsize{$\pm$ 0.023} & 0.621 \scriptsize{$\pm$ 0.004} & 0.617 \scriptsize{$\pm$ 0.006} & 13.3 \\
AllSet & 0.577 \scriptsize{$\pm$ 0.005} & 0.570 \scriptsize{$\pm$ 0.002} & 0.523 \scriptsize{$\pm$ 0.018} & 0.502 \scriptsize{$\pm$ 0.016} & 0.585 \scriptsize{$\pm$ 0.008} & 0.515 \scriptsize{$\pm$ 0.013} & 0.686 \scriptsize{$\pm$ 0.010} & 0.681 \scriptsize{$\pm$ 0.009} & 0.679 \scriptsize{$\pm$ 0.006} & 0.660 \scriptsize{$\pm$ 0.010} & 10.1 \\
HDS$^{ode}$ & 0.561 \scriptsize{$\pm$ 0.003} & 0.552 \scriptsize{$\pm$ 0.003} & 0.537 \scriptsize{$\pm$ 0.009} & 0.529 \scriptsize{$\pm$ 0.010} & 0.554 \scriptsize{$\pm$ 0.004} & 0.548 \scriptsize{$\pm$ 0.002} & 0.703 \scriptsize{$\pm$ 0.008} & 0.703 \scriptsize{$\pm$ 0.008} & 0.669 \scriptsize{$\pm$ 0.004} & 0.664 \scriptsize{$\pm$ 0.005} & 7.1 \\
LEGCN & 0.520 \scriptsize{$\pm$ 0.002} & 0.511 \scriptsize{$\pm$ 0.003} & 0.698 \scriptsize{$\pm$ 0.008} & 0.689 \scriptsize{$\pm$ 0.008} & 0.676 \scriptsize{$\pm$ 0.016} & 0.675 \scriptsize{$\pm$ 0.016} & 0.733 \scriptsize{$\pm$ 0.015} & 0.731 \scriptsize{$\pm$ 0.016} & 0.703 \scriptsize{$\pm$ 0.002} & 0.698 \scriptsize{$\pm$ 0.002} & 7.4 \\
MultiSetMixer & 0.593 \scriptsize{$\pm$ 0.005} & 0.585 \scriptsize{$\pm$ 0.005} & 0.542 \scriptsize{$\pm$ 0.013} & 0.538 \scriptsize{$\pm$ 0.011} & 0.561 \scriptsize{$\pm$ 0.004} & 0.552 \scriptsize{$\pm$ 0.003} & 0.706 \scriptsize{$\pm$ 0.007} & 0.705 \scriptsize{$\pm$ 0.007} & 0.668 \scriptsize{$\pm$ 0.001} & 0.666 \scriptsize{$\pm$ 0.001} & 5.6 \\
HNN & 0.543 \scriptsize{$\pm$ 0.002} & 0.533 \scriptsize{$\pm$ 0.002} & 0.522 \scriptsize{$\pm$ 0.008} & 0.354 \scriptsize{$\pm$ 0.008} & 0.527 \scriptsize{$\pm$ 0.006} & 0.409 \scriptsize{$\pm$ 0.083} & 0.527 \scriptsize{$\pm$ 0.028} & 0.436 \scriptsize{$\pm$ 0.094} & 0.673 \scriptsize{$\pm$ 0.006} & 0.668 \scriptsize{$\pm$ 0.006} & 11.9 \\
ED-HNN & 0.503 \scriptsize{$\pm$ 0.006} & 0.479 \scriptsize{$\pm$ 0.008} & 0.532 \scriptsize{$\pm$ 0.011} & 0.511 \scriptsize{$\pm$ 0.014} & 0.599 \scriptsize{$\pm$ 0.002} & 0.559 \scriptsize{$\pm$ 0.013} & 0.709 \scriptsize{$\pm$ 0.007} & 0.709 \scriptsize{$\pm$ 0.007} & 0.668 \scriptsize{$\pm$ 0.008} & 0.656 \scriptsize{$\pm$ 0.009} & 11.1 \\ 
WHATsNet & 0.632 \scriptsize{$\pm$ 0.004} & 0.625 \scriptsize{$\pm$ 0.006} & 0.526 \scriptsize{$\pm$ 0.014} & 0.519 \scriptsize{$\pm$ 0.014} & 0.587 \scriptsize{$\pm$ 0.004} & 0.582 \scriptsize{$\pm$ 0.008} & 0.711 \scriptsize{$\pm$ 0.010} & 0.710 \scriptsize{$\pm$ 0.009} & 0.677 \scriptsize{$\pm$ 0.004} & 0.670 \scriptsize{$\pm$ 0.004} & 4.9 \\ 
\midrule
CoNHD (UNB) (\textit{ours}) & \cellcolor{lightgray} \underline{0.646 \scriptsize{$\pm$ 0.003}} & \cellcolor{lightgray} \underline{0.640 \scriptsize{$\pm$ 0.004}} & \cellcolor{lightgray} \underline{0.769 \scriptsize{$\pm$ 0.028}} & \cellcolor{lightgray} \underline{0.767 \scriptsize{$\pm$ 0.028}} & \cellcolor{lightgray} \underline{0.884 \scriptsize{$\pm$ 0.011}} & \cellcolor{lightgray} \underline{0.883 \scriptsize{$\pm$ 0.011}} & \cellcolor{lightgray} \underline{0.827 \scriptsize{$\pm$ 0.013}} & \cellcolor{lightgray} \textbf{0.826 \scriptsize{$\pm$ 0.013}} & \cellcolor{lightgray} \underline{0.896 \scriptsize{$\pm$ 0.003}} & \cellcolor{lightgray} \underline{0.895 \scriptsize{$\pm$ 0.003}} & \underline{1.8} \\
CoNHD (ISAB) (\textit{ours}) & \cellcolor{lightgray} \textbf{0.650 \scriptsize{$\pm$ 0.003}} & \cellcolor{lightgray} \textbf{0.646 \scriptsize{$\pm$ 0.004}} & \cellcolor{lightgray} \textbf{0.800 \scriptsize{$\pm$ 0.019}} & \cellcolor{lightgray} \textbf{0.797 \scriptsize{$\pm$ 0.020}} & \cellcolor{lightgray} \textbf{0.903 \scriptsize{$\pm$ 0.002}} & \cellcolor{lightgray} \textbf{0.902 \scriptsize{$\pm$ 0.002}} & \cellcolor{lightgray} \textbf{0.828 \scriptsize{$\pm$ 0.010}} & \cellcolor{lightgray} \textbf{0.826 \scriptsize{$\pm$ 0.010}} & \cellcolor{lightgray} \textbf{0.899 \scriptsize{$\pm$ 0.004}} & \cellcolor{lightgray} \textbf{0.898 \scriptsize{$\pm$ 0.004}} & \textbf{1.1} \\
\bottomrule 
\end{tabular}
\end{adjustbox}
\end{table*}

In this section, we conduct experiments to evaluate the effectiveness and efficiency of the proposed CoNHD method on the ENC task as well as several downstream tasks.

\subsection{Effectiveness and Efficiency on ENC}\label{sec:exp_ENC}

\textbf{Datasets.} We conduct experiments on ten ENC datasets. These datasets include all six datasets in \citep{choe2023classification}, which are Email (Email-Enron and Email-Eu), StackOverflow (Stack-Biology and Stack-Physics), and Co-authorship networks (Coauth-DBLP and Coauth-AMiner). Notably, Email-Enron and Email-Eu have relatively large node degrees, while Email-Enron has relatively large edge degrees as well. Additionally, as real-world hypergraph structures typically contain more noise \citep{cai2022hypergraph} than benchmark datasets, we construct four new datasets (Cora-Outsider, DBLP-Outsider, Citeseer-Outsider, and Pubmed-Outsider) by transforming the outsider identification task \citep{zhang2020hyper} into the ENC task. In these datasets, we randomly replace half of the nodes in each edge with other nodes, and the task is to predict whether each node belongs to the corresponding edge.

\textbf{Baselines.} We compare CoNHD (with two diffusion operator implementations, UNB and ISAB) to ten baseline HGNN methods, including five models following the traditional message passing framework (HyperGNN \citep{feng2019hypergraph}, HAT \citep{hwang2021hyfer}, UniGCNII \citep{huang2021unignn}, AllSet \citep{chien2022you}, and HDS$^{ode}$ \citep{yan2024hypergraph}) and five models that utilize edge-dependent node information (LEGCN \citep{yang2022semi}, MultiSetMixer \citep{telyatnikov2023hypergraph}, HNN \citep{aponte2022hypergraph}, ED-HNN \citep{wang2023equivariant}, and WHATsNet \citep{choe2023classification}). Since a hypergraph can also be viewed as a bipartite graph, we add three traditional GNN methods (GraphSAGE \citep{hamilton2017inductive}, GAT \citep{velivckovic2018graph}, and a graph diffusion-based method ADGN \citep{gravinaanti2023anti}) as our baselines. We follow the experiment setup in \citep{choe2023classification}.


\textbf{Effectiveness.} As shown in Table~\ref{tab:ENC_performance}, CoNHD achieves the best performance across all datasets in both Micro-F1 and Macro-F1. Notably, CoNHD shows substantial improvements on Email-Enron and Email-Eu, which have relatively large-degree nodes or edges. Baseline methods using single node or edge representations can cause information loss for large degree nodes or edges in these datasets. In contrast, the number of co-representations in CoNHD is adaptive to the node and edge degrees. Additionally, CoNHD achieves very significant improvements on four outsider identification datasets. This suggests that mixing features from noise outsiders into an entangled edge representation significantly degrades the performance. Our method, with the co-representation design, can differentiate features from normal nodes and outsiders, leading to superior results. On simpler datasets with very low node and edge degrees, such as Stack-Physics, the improvement is less pronounced. In these datasets, each hyperedge only contains a very limited number of nodes (about 2 on average, similar to normal graphs) and cannot fully demonstrate the ability of different HGNNs in modeling complex higher-order interactions. Nevertheless, our method still consistently achieves the best performance on these datasets with statistically significant improvement (p-value $<$ 0.5) in most cases. The performance gap between the two diffusion operator implementations (UNB and ISAB) is minimal, while the transformer-based ISAB implementation overall demonstrates better performance. Unless otherwise specified, we adopt the better ISAB implementation for most subsequent experiments.

To quantify the diversity of ENC labels of the same node and explore its impact on model performance, we introduce a measure called node entropy. Specifically, for a node $v$, the node entropy $H(v)$ is defined as $H(v) := - \sum_{c \in \sC} p_v(c) \log p_v(c)$, where $p_v(c) = \sum_{e \in \sEv} \mathbb{I}(y_{v,e} = c) / d_v$. A higher node entropy value indicates that the node labels in different edges tend to be different. The edge entropy is defined similarly. We sort the entropy values in ascending order and divide them into ten equal-frequency levels. We calculate the model performance in each level. As shown in Fig.~\ref{fig:emailenron_entropy_microf1}, CoNHD demonstrates advantages compared to message passing methods like WHATsNet as the entropy level increases. This suggests that using single node representations is insufficient to capture different node/edge-specific features for predicting different ENC labels.

\begin{figure}[t]
\centering
\includegraphics[width=1\linewidth]{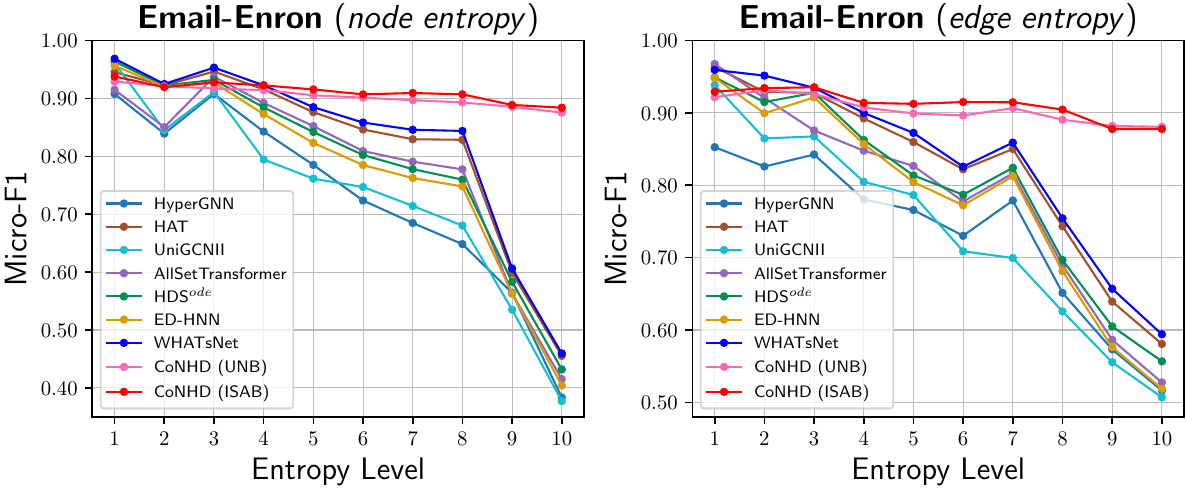}
\caption{\textbf{Comparison of performance under different node/edge entropy levels.} As node/edge entropy increases, the performance of message passing methods drops significantly, whereas CoNHD still maintains high performance.}
\label{fig:emailenron_entropy_microf1}
\end{figure}

\begin{figure}[t]
\centering
\includegraphics[width=1.0\linewidth]{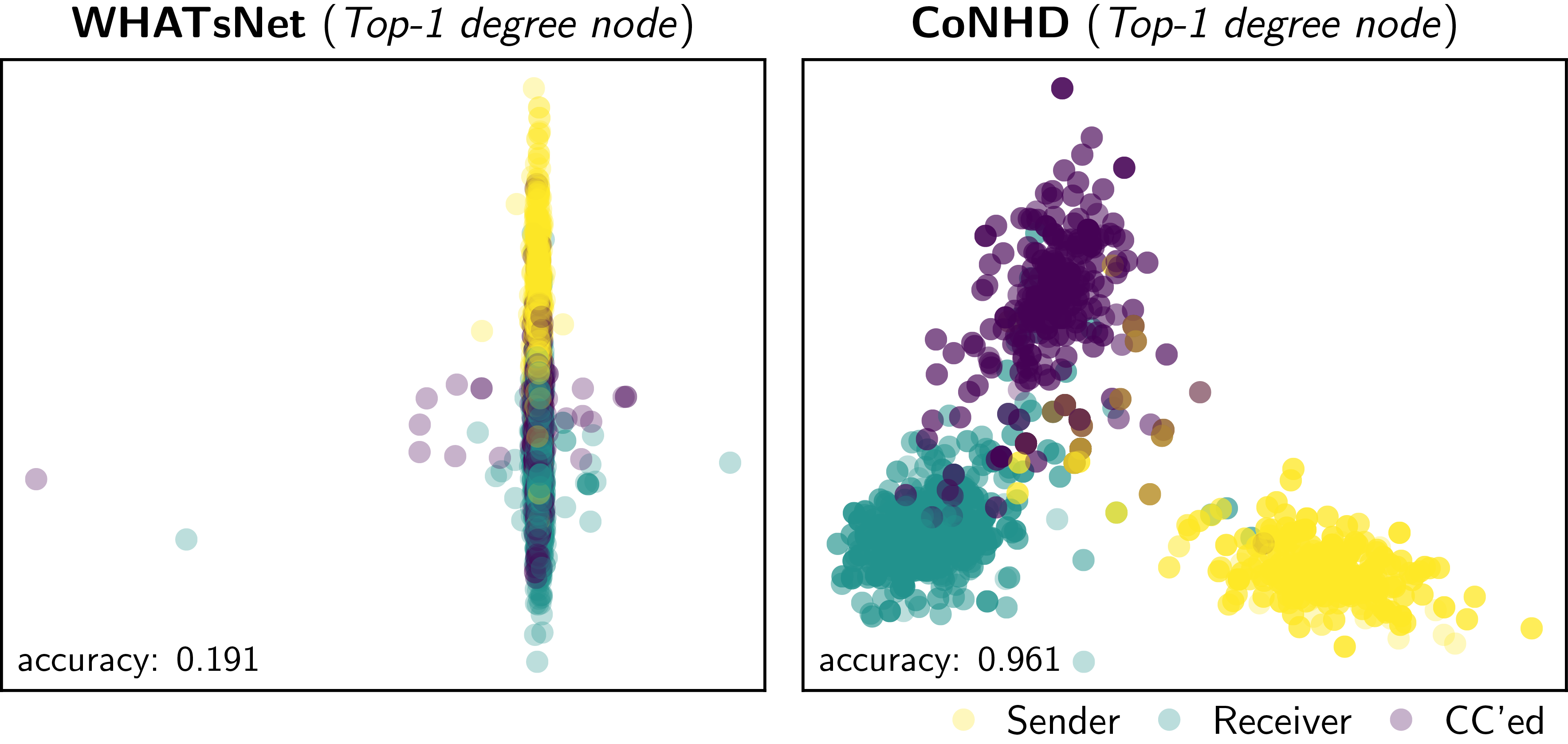}
\caption{\textbf{Visualization of embeddings of the top-1 degree node (user) within different hyperedges (emails) in Email-Enron using LDA.} The embeddings learned by CoNHD exhibit clearer distinctions based on labels in different hyperedges compared to the embeddings learned by WHATsNet. }
\label{fig:visualization_embeddings}
\end{figure}

\begin{table*}[t]
\setlength\tabcolsep{3pt}
\caption{\textbf{Performance of predicting ENC labels on downstream datasets.}}\label{tab:ENC_performance_downstream}
\centering
\begin{adjustbox}{max width = 0.68\linewidth}
\begin{tabular}{c|cccccccc}
\toprule
\multirow{2}{*}{Method} & \multicolumn{2}{c}{Halo} & \multicolumn{2}{c}{AMiner} & \multicolumn{2}{c}{DBLP} & \multicolumn{2}{c}{Etail} \\
& Micro-F1 & Macro-F1 & Micro-F1 & Macro-F1 & Micro-F1 & Macro-F1 & Micro-F1 & Macro-F1 \\ 
\midrule
WHATsNet & 0.377 \scriptsize{$\pm$ 0.002} & 0.352 \scriptsize{$\pm$ 0.006} & 0.631 \scriptsize{$\pm$ 0.027} & 0.561 \scriptsize{$\pm$ 0.044} & 0.625 \scriptsize{$\pm$ 0.092} & 0.553 \scriptsize{$\pm$ 0.128} & 0.622 \scriptsize{$\pm$ 0.004} & 0.461 \scriptsize{$\pm$ 0.007} \\
CoNHD (\textit{ours}) & \cellcolor{lightgray} \textbf{0.396 \scriptsize{$\pm$ 0.003}} & \cellcolor{lightgray} \textbf{0.381 \scriptsize{$\pm$ 0.007}} & \textbf{0.661 \scriptsize{$\pm$ 0.027}} & \textbf{0.605 \scriptsize{$\pm$ 0.040}} & \cellcolor{lightgray} \textbf{0.768 \scriptsize{$\pm$ 0.094}} & \cellcolor{lightgray} \textbf{0.740 \scriptsize{$\pm$ 0.127}} & \cellcolor{lightgray} \textbf{0.751 \scriptsize{$\pm$ 0.008}} & \cellcolor{lightgray} \textbf{0.696 \scriptsize{$\pm$ 0.008}} \\
\bottomrule 
\end{tabular}
\end{adjustbox}
\end{table*}

\begin{table*}[t]
\setlength\tabcolsep{3pt}
\caption{\textbf{Performance on downstream tasks using the predicted ENC labels. }}\label{tab:performance_downstream}
\centering
\vspace{-8pt}
\begin{adjustbox}{max width =1.0\linewidth}
\begin{subtable}{.43\linewidth}
\centering
\captionsetup{font=large}
\caption{Ranking Aggregation (Acc.$\uparrow$)}
\begin{tabular}{l|cc}
    \toprule
    Method & Halo & AMiner \\ 
    \midrule
    RW \cite{chitra2019random} w/o Labels & 0.532 \scriptsize{$\pm$ 0.000} & 0.654 \scriptsize{$\pm$ 0.000} \\
    \midrule
    RW \cite{chitra2019random} w/ WHATsNet & 0.714 \scriptsize{$\pm$ 0.004} & 0.693 \scriptsize{$\pm$ 0.001} \\
    RW \cite{chitra2019random} w/ CoNHD & \cellcolor{lightgray} \textbf{0.723 \scriptsize{$\pm$ 0.003}} & \cellcolor{lightgray} \textbf{0.695 \scriptsize{$\pm$ 0.001}} \\
    \midrule
    RW \cite{chitra2019random} w/ GroundTruth & 0.711 \scriptsize{$\pm$ 0.000} & 0.675 \scriptsize{$\pm$ 0.000} \\
    \bottomrule
\end{tabular}
\end{subtable}
\begin{subtable}{.43\linewidth}
\centering
\captionsetup{font=large}
\caption{Clustering (NMI$\uparrow$)}
\begin{tabular}{l|cc}
    \toprule
    Method & DBLP & AMiner \\
    \midrule
    RDC-Spec \cite{hayashi2020hypergraph} w/o Labels & 0.163 \scriptsize{$\pm$ 0.000} & 0.338 \scriptsize{$\pm$ 0.000} \\ 
    \midrule
    RDC-Spec \cite{hayashi2020hypergraph} w/ WHATsNet & 0.184 \scriptsize{$\pm$ 0.028} & 0.352 \scriptsize{$\pm$ 0.034} \\
    RDC-Spec \cite{hayashi2020hypergraph} w/ CoNHD & \textbf{0.196 \scriptsize{$\pm$ 0.022}} & \textbf{0.354 \scriptsize{$\pm$ 0.016}} \\
    \midrule
    RDC-Spec \cite{hayashi2020hypergraph} w/ GroundTruth & 0.221 \scriptsize{$\pm$ 0.000} & 0.359 \scriptsize{$\pm$ 0.000} \\
    \bottomrule
\end{tabular}
\end{subtable}
\hspace{-14pt}
\begin{subtable}{.43\linewidth}
\centering
\captionsetup{font=large}
\caption{Product Return Prediction (F1$\uparrow$)}
\begin{tabular}{l|c}
    \toprule
    Method & Etail \\
    \midrule
    HyperGO \cite{li2018tail} w/o Labels & 0.718 \scriptsize{$\pm$ 0.000} \\
    \midrule
    HyperGO \cite{li2018tail} w/ WHATsNet & 0.723 \scriptsize{$\pm$ 0.003} \\
    HyperGO \cite{li2018tail} w/ CoNHD & \cellcolor{lightgray} \textbf{0.733 \scriptsize{$\pm$ 0.004}} \\
    \midrule
    HyperGO \cite{li2018tail} w/ GroundTruth & 0.738 \scriptsize{$\pm$ 0.000} \\
    \bottomrule
\end{tabular}
\end{subtable}
\end{adjustbox}
\end{table*}

To examine whether CoNHD learns separable embeddings for the same node across different edges, we follow \citep{choe2023classification} and use LDA to visualize embeddings of the largest-degree node in Email-Enron. Fig.~\ref{fig:visualization_embeddings} shows that CoNHD learns more separable embeddings than WHATsNet. Compared to WHATsNet, CoNHD provides adaptive representation sizes by introducing co-representations, which can avoid information loss for large-degree nodes.

\textbf{Efficiency.} The performance and training time on Email-Enron and Email-Eu are illustrated in Fig.~\ref{fig:training_time}, tested on a single NVIDIA A100 GPU. As full-batch training is impractical for large hypergraphs, we only compare models using mini-batch training. The best baseline WHATsNet trades efficiency for performance, while CoNHD not only achieves the best performance but also maintains high efficiency. Different from message passing, in CoNHD, the within-edge and within-node interactions are designed as parallel without inter-dependency, which is helpful for restricting inputs from only direct neighbors to the target node-edge pair and improving efficiency. Additionally, CoNHD eliminates the edge-dependent feature extraction and aggregation steps in edge-dependent message passing methods, further increasing efficiency.

\begin{figure}[t]
\centering
\includegraphics[width=1\linewidth]{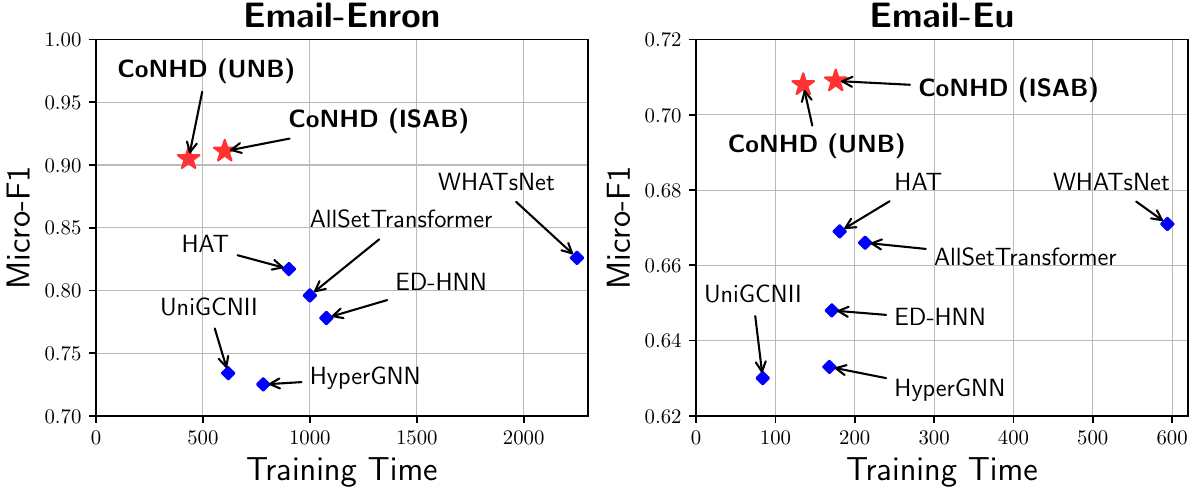}
\caption{\textbf{Comparison of the performance and training time (minutes).} CoNHD demonstrates significant improvements in terms of Micro-F1 while maintaining good efficiency.}
\label{fig:training_time}
\end{figure}

\subsection{Performance of Constructing Deep HGNNs}

Oversmoothing is a well-known issue in deep HGNNs \citep{wang2023equivariant, yan2024hypergraph}, which hinders the utilization of long-range information and limits performance. Diffusion-based HGNNs are shown to be more robust against this issue \citep{wang2023equivariant, chamberlain2021grand}. To evaluate the performance of CoNHD in constructing deep HGNNs, we conduct experiments on ENC with varying numbers of HGNN layers. As deeper HGNNs significantly increase GPU memory usage, we experiment on Citeseer-Outsider, a relatively small dataset, to ensure computational feasibility.

As shown in Fig.~\ref{fig:depth_experiment}, the performance of WHATsNet drops sharply beyond 4 layers, while two diffusion-based methods, EDHNN and HDS$^{ode}$, remain stable but show no notable gains with deeper architectures. In contrast, CoNHD continues to improve with more layers, and the performance converges after 16 layers. This suggests that CoNHD can effectively leverage long-range information to enhance performance. Compared to other diffusion-based HGNNs, the co-representation design in CoNHD allows the same node to have distinct representations when interacting within different hyperedges, ensuring the updated information to different node-edge pairs remains diverse and thereby preventing the collapse into oversmoothed representations.

\subsection{Application to Downstream Tasks}

The ENC task has been shown to be beneficial for many downstream applications \citep{choe2023classification}. Following \citep{choe2023classification}, we evaluate whether the ENC labels predicted by CoNHD can improve three downstream tasks: Ranking Aggregation (Halo, AMiner), Clustering (DBLP, AMiner), and Product Return Prediction (Etail). The ENC labels are first predicted and then used as additional input features to enhance the downstream prediction performance. Since the improvements depend not only on model performance in ENC but also on the relevance between downstream tasks and ENC, we report both the ENC prediction results and the downstream task performance. 

\begin{figure}[t]
\centering
\includegraphics[width=0.75\linewidth]{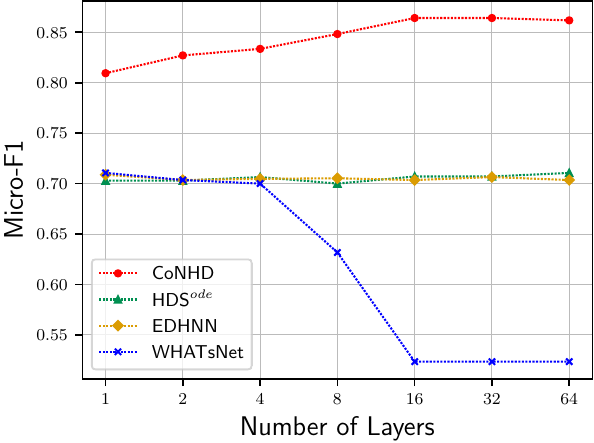}
\caption{\textbf{Performance of HGNNs with varying numbers of layers on the Citeseer-Outsider dataset.} CoNHD achieves the best performance across all settings.}
\label{fig:depth_experiment}
\end{figure}

Consistent with results in Section~\ref{sec:exp_ENC}, Table~\ref{tab:ENC_performance_downstream} shows that CoNHD consistently achieves superior performance on ENC label prediction across all datasets. Table~\ref{tab:performance_downstream} further demonstrates that using predicted ENC labels as additional information improves downstream task performance compared to cases where these labels are not used. CoNHD also outperforms WHATsNet across all three downstream tasks, benefiting from more accurate ENC label prediction. In the ranking aggregation task, using predicted labels even surpasses the cases using ground truth labels. This suggests that the ground truth labels might contain noise, while the predicted labels better capture the underlying smooth structure of the label space and further enhance the downstream task performance.

\subsection{Effectiveness of the Equivariant Design}\label{sec:exp_ablation_study}

CoNHD reformulates interactions as multi-output functions by introducing co-representations as shown in Fig.~\ref{fig:single_output_multi_output}, thereby enabling the use of equivariant functions. To show the importance of the equivariant design, we apply a mean aggregation to the equivariant outputs, reducing $\phi$ and $\varphi$ to invariant functions with identical outputs for different node-edge pairs. We conduct experiments on Email-Enron and Email-Eu.

\begin{table}
\small{\caption{\textbf{Effectiveness of the equivariance in two diffusion operators $\phi$ and $\varphi$.} We use \cmark~and~\xmark~to denote whether the corresponding operator is equivariant or invariant, respectively. \setlength{\fboxsep}{1pt}\colorbox{lightgray}{Shaded cells} indicate the variants with equivariance significantly outperform the one with only invariant operators.}\label{tab:ablation_study}}
\centering
\begin{adjustbox}{max width =1.0\linewidth}
\begin{tabular}{c|cc|cccc}
\toprule
\multirow{2}{*}{Method} & \multirow{2}{*}{$\phi$} & \multirow{2}{*}{$\varphi$} & \multicolumn{2}{c}{Email-Enron} & \multicolumn{2}{c}{Email-Eu} \\
& & & Micro-F1 & Macro-F1 & Micro-F1 & Macro-F1 \\
\midrule
\multirow{4}{*}{CoNHD (UNB)} & \xmark & \xmark & 0.827 \scriptsize{$\pm$ 0.000} & 0.769 \scriptsize{$\pm$ 0.004} & 0.673 \scriptsize{$\pm$ 0.000} & 0.645 \scriptsize{$\pm$ 0.001} \\
& \xmark & \cmark & \cellcolor{lightgray} 0.876 \scriptsize{$\pm$ 0.001} & \cellcolor{lightgray} 0.817 \scriptsize{$\pm$ 0.006} & \cellcolor{lightgray} 0.698 \scriptsize{$\pm$ 0.001} & \cellcolor{lightgray} 0.677 \scriptsize{$\pm$ 0.002} \\
& \cmark & \xmark & \cellcolor{lightgray} 0.903 \scriptsize{$\pm$ 0.001} & \cellcolor{lightgray} 0.855 \scriptsize{$\pm$ 0.004} & \cellcolor{lightgray} 0.707 \scriptsize{$\pm$ 0.000} & \cellcolor{lightgray} 0.688 \scriptsize{$\pm$ 0.002} \\
& \cmark & \cmark & \cellcolor{lightgray} \textbf{0.905 \scriptsize{$\pm$ 0.001}} & \cellcolor{lightgray} \textbf{0.858 \scriptsize{$\pm$ 0.004}} & \cellcolor{lightgray} \textbf{0.708 \scriptsize{$\pm$ 0.001}} & \cellcolor{lightgray} \textbf{0.689 \scriptsize{$\pm$ 0.001}} \\
\midrule
\multirow{4}{*}{CoNHD (ISAB)} & \xmark & \xmark & 0.829 \scriptsize{$\pm$ 0.001} & 0.765 \scriptsize{$\pm$ 0.007} & 0.673 \scriptsize{$\pm$ 0.001} & 0.647 \scriptsize{$\pm$ 0.002} \\
& \xmark & \cmark & \cellcolor{lightgray} 0.878 \scriptsize{$\pm$ 0.001} & \cellcolor{lightgray} 0.823 \scriptsize{$\pm$ 0.005} & \cellcolor{lightgray} 0.698 \scriptsize{$\pm$ 0.001} & \cellcolor{lightgray} 0.678 \scriptsize{$\pm$ 0.003} \\
& \cmark & \xmark & \cellcolor{lightgray} 0.910 \scriptsize{$\pm$ 0.001} & \cellcolor{lightgray} 0.870 \scriptsize{$\pm$ 0.003} & \cellcolor{lightgray} 0.707 \scriptsize{$\pm$ 0.001} & \cellcolor{lightgray} 0.689 \scriptsize{$\pm$ 0.001} \\
& \cmark & \cmark & \cellcolor{lightgray} \textbf{0.911 \scriptsize{$\pm$ 0.001}}  & \cellcolor{lightgray} \textbf{0.871 \scriptsize{$\pm$ 0.002}} & \cellcolor{lightgray} \textbf{0.709 \scriptsize{$\pm$ 0.001}} & \cellcolor{lightgray} \textbf{0.690 \scriptsize{$\pm$ 0.002}} \\
\bottomrule
\end{tabular}
\end{adjustbox}
\end{table}

As shown in Table~\ref{tab:ablation_study}, CoNHD with two equivariant operators achieves the best performance, significantly outperforming the variant with two invariant operators. Furthermore, variants with just one equivariant operator still outperform the fully invariant model, indicating that equivariance benefits both within-edge and within-node interactions. We also notice that the performance gap between the full equivariant model and the variant with only the equivariant within-edge operator $\phi$ is not significant. This might imply that within-edge interactions can provide the majority of the information needed for predicting the ENC labels in these datasets. 

\section{Conclusion}\label{sec:conclusion}

In this paper, we develop CoNHD, a novel diffusion-based HGNN for modeling edge-specific features in ENC. CoNHD reformulates within-edge and within-node interactions as multi-output equivariant diffusion processes among node-edge co-representations, which disentangles edge-specific features and provides adaptive representation sizes. Our experiments demonstrate that CoNHD achieves the best performance on ten benchmark ENC datasets and several downstream tasks without sacrificing efficiency. We further show the robustness of CoNHD against the oversmoothing issue and validate the effectiveness of the equivariant design. Future work could explore extending CoNHD to more complex scenarios, such as dynamic hypergraphs \citep{yin2022dynamic} and multi-modal hypergraphs~\citep{kim2020hypergraph}, where existing approaches mostly rely on traditional message passing-based HGNNs \citep{sun2022structure, cheng2024retrieval}. CoNHD has the potential to improve representation quality by modeling edge-specific features in these complex settings.

\begin{acks}
This work is supported by the AI4Intelligence project with file number KICH1.VE01.20.011, partly financed by the Dutch Research Council (Nederlandse Organisatie voor Wetenschappelijk Onderzoek, NWO).
\end{acks}

\section*{GenAI Usage Disclosure}\label{sec:genai_usage_disclosure}

During the writing process, we employed ChatGPT solely for language refinement and grammatical corrections. All technical concepts, experimental work, and analytical content were independently conducted and written by the authors without relying on generative AI for idea generation or content creation.

\bibliographystyle{ACM-Reference-Format}
\balance
\bibliography{main}

\end{document}